\DeclareRobustCommand{\bb}[1]{\mathbb{#1}} 
\DeclareRobustCommand{\mr}[1]{\mathrm{#1}}
\DeclareRobustCommand{\beef}{\vphantom{\sum}}
\DeclareRobustCommand{\set}[1]{\left\{#1 \right\}}
\DeclareRobustCommand{\Set}[2][]{\left\{#1 \, \middle|\, #2 \right\}}
\DeclareRobustCommand{\nf}[2]{\nicefrac{#1}{#2}}
\DeclareRobustCommand{\f}[2]{\frac{#1}{#2}}
\DeclareMathOperator{\Var}{Var}
\DeclareRobustCommand{\indep}{\mathbin{\perp \!\!\! \perp}}
\DeclareRobustCommand{\annot}[1]{,\text{{\color{gray} (#1)}}}
\title{A Possibility in Algorithmic Fairness: Can Calibration and Equal Error Rates Be Reconciled?}
\titlerunning{Calibrated Scores and Equal Error Classifiers}
\author{Claire Lazar Reich}{MIT Statistics Center and Department of Economics, Cambridge, MA, USA}{clazar@mit.edu}{}{Supported by the National Science Foundation Graduate Research Fellowship under Grant No. 1122374.}
\author{Suhas Vijaykumar}{MIT Statistics Center and Department of Economics, Cambridge, MA, USA}{suhasv@mit.edu}{}{}
\authorrunning{C.\,L. Reich and S. Vijaykumar}
\keywords{fair prediction, impossibility results, screening decisions, classification, calibration, equalized odds, optimal transport, risk scores}
\begin{document}

\captionsetup[algorithm]{name=Algorithm Stage, labelformat=boxed, position=top}

\maketitle

\begin{abstract}
Decision makers increasingly rely on algorithmic risk scores to determine access to binary treatments including bail, loans, and medical interventions. In these settings, we reconcile two fairness criteria that were previously shown to be in conflict: calibration and error rate equality. In particular, we derive necessary and sufficient conditions for the existence of calibrated scores that yield classifications achieving equal error rates at any given group-blind threshold. We then present an algorithm that searches for the most accurate score subject to both calibration and minimal error rate disparity. Applied to the COMPAS criminal risk assessment tool, we show that our method can eliminate error disparities while maintaining calibration. In a separate application to credit lending, we compare our procedure to the omission of sensitive features and show that it raises both profit and the probability that creditworthy individuals receive loans.
\end{abstract}

\section{Introduction}
\label{intro}

Today's algorithms reach deep into decisions that guide our lives, from loan approvals to medical treatments to foster care placements. 
Making these high-impact decisions fairly is an effort undergoing public scrutiny. 
In one investigation, \textit{ProPublica} showed that an algorithm operating in the U.S. criminal justice system, COMPAS, discriminated against black defendants by misclassifying them as high-risk at significantly higher rates than white defendants \cite{angwin_julia_machine_2016}.
On the other hand, it was later revealed that the same algorithm did satisfy a different form of fairness: calibration of scores for both black and white defendants \cite{feller2016computer}. This meant that on average, a defendant's score reflected the same risk level regardless of race. 

Researchers have sought to explain how a screening algorithm like COMPAS can satisfy one natural notion of fairness but not another, spurring a research agenda to characterize how definitions of algorithmic fairness relate to one another. 
Multiple studies in this literature proved that algorithms face inevitable tradeoffs whenever they predict on groups that have different average outcomes \cite{kleinberg_inherent_2016, chouldechova_fair_2017, berk2018fairness, 10.1145/3097983.3098095, 10.1145/3328526.3329621}. These influential ``impossibility results'' have underscored the need for practitioners to target certain fairness criteria at the expense of others. 

We show that it is in fact possible to reconcile the   
two notions of fairness that gained influence following the COMPAS investigation: calibration and equal error rates. 
In important previous work, these two criteria were proven to be mutually incompatible when both are applied to a \textit{risk score} \cite{kleinberg_inherent_2016, pleiss_fairness_2017} and when both are applied to a \textit{classifier} \cite{chouldechova_fair_2017}.
Naturally these findings were interpreted as evidence that calibration and equal error rates are incompatible altogether  \cite{angwin_julia_bias_2016}. It was therefore speculated that COMPAS's enforcement of score calibration made its error rate imbalances inevitable \cite{chouldechova_fair_2017}.

In contrast, we show that both calibration and equal error rates can be reconciled in COMPAS and in many other real-world settings where protected groups have different mean outcomes.  
We relax the mathematical tension between these two fairness criteria by separately enforcing \textit{calibration on the score} and \textit{equal error rates on the corresponding classifier}. 
In particular, we prove that it is possible to design calibrated scores that yield equal error rate classifications at group-blind cutoffs, and we provide a method to do so with maximal accuracy. Furthermore, we 
develop practical extensions of the method, such as showing how to enforce weaker notions of the equal error rate criterion (like the ``equality of opportunity'' criterion of Hardt et al.~\cite{hardt_equality_2016}) and how to accommodate multiple protected subgroups. 

Our framework and method can be applied to two settings. In the first, we consider the problem of providing risk scores to a profit-maximizing third-party agent, such as a lender, who then uses them to assign binary treatments, such as loan approvals and denials. We illustrate how to construct calibrated scores 
that lead this profit-maximizer to make classifications satisfying equal error rates.
In the second setting, we consider risk assessments like COMPAS that output both scores and classification recommendations, and show that the scores can be made to satisfy calibration while the classification recommendations can be made to satisfy equal error rates.  

This paper supports growing evidence on the complementary relationship between data quality and fairness objectives 
\cite{10.1145/3328526.3329624, 10.1145/2090236.2090255, pmlr-v81-dwork18a, corbett-davies_measure_2018, kleinberg_algorithmic_2018, 10.1145/3328526.3329621}.
In particular, we show that access to sufficiently informative features is required to satisfy our fairness criteria, and that the feasible set of solutions grows with the informativeness of the data. In an empirical credit lending example, we compare our 
method to a commonly practiced strategy of data omission. It yields higher lender profit while also improving access to loans for creditworthy applicants in all groups. 

The results proceed as follows. In Section 2, we prove that it is possible to construct calibrated scores that lead to equal error rate classifications and we precisely characterize when such scores exist. In Section 3, we propose an algorithm that produces the most accurate possible score satisfying the fairness criteria and minimizing the decision-maker's errors. We apply our method in Section 4 to two empirical settings. We first assess its performance in helping a lender screen loan applicants of various educational backgrounds. 
We also apply the method to the COMPAS criminal risk assessment tool, where we show that our procedure can eliminate error rate imbalances in risk classifications while preserving calibration of scores. 

\subsection{Related Work}
Our paper belongs to a body of work that studies the mathematical relationships between various individual and group measures of fairness. Calibration and equal error rates have been formalized and extensively studied in prior work \cite{pleiss_fairness_2017, hardt_equality_2016, kleinberg_inherent_2016, chouldechova_fair_2017}. In particular, Kleinberg et al.~\cite{kleinberg_inherent_2016} and Pleiss et al.~\cite{pleiss_fairness_2017} show that these criteria are incompatible when applied to a risk score and Chouldechova \cite{chouldechova_fair_2017} shows the corresponding result for binary classifiers. 
We consider a natural variation of the problem where we ask whether a calibrated score can, upon being supplied to a rational third-party, lead to equal-error predictions. Surprisingly, we find that the answer is yes. 

Our work also contributes to a recent strand of the literature which studies how 
algorithmic prediction can interact with self-interested decision makers, bridging the classical problem of prediction with the traditionally economic problem of information design \cite{pmlr-v119-perdomo20a, pmlr-v119-shavit20a}. From this perspective, we study the existence of scores that lead to desirable equilibria: those in which the final decision rule is group-blind due to calibration, and the resulting decisions satisfy equal error rates. 

Finally, we believe it is important to emphasize that the two fairness criteria we study do not encompass all notions of fairness. Tradeoffs remain between these criteria and others. For example, enforcing equal error rates requires that the classifications’ positive and negative predictive values will be unequal across groups, meaning that one groups’ scores would carry greater signal to the decision-maker than the others’ \cite{chouldechova_fair_2017}. In addition, equal error rate classifications will generically require changes to the Bayes’ optimal classifications, and enforcing calibration does not diminish this requirement \cite{corbett-davies_measure_2018}.

Decisions for how to prioritize fairness conditions are likely to vary by application going forward.
We hope that by clarifying the precise relationship between two influential criteria, we can facilitate
these decisions, and that in settings where calibration and equal error rates are considered essential,
our algorithm can help yield accurate predictions and fairer outcomes.

\section{Theoretical Results} 
\label{sec:theory}

\subsection{Formal Setting}
\label{sec:definitions}
Let us consider a triple $(Y, X, A)$ on a common probability space $\bb P$, where $Y \in \set{0,1}$ is an outcome variable, $X \in \bb{R}^d$ is a vector of features, and $A \in \set{H,L}$ is a protected attribute differentiating two groups with unequal base rates $\mu_A = \bb{E}[Y|A]$ of the outcome,
\begin{equation}
\mu_L < \mu_H.
\end{equation} 
Our goal is to estimate a score function $\hat{p} \equiv \hat{p}(X, A) \in [0, 1]$ that predicts $Y$ with maximum accuracy subject to the constraints of calibration and equal error rates.  Specifically, we hand $\hat p$ to a decision-maker tasked with selecting classifications $\hat y \in \{0, 1\}$ that minimize their loss function
\begin{equation}
\ell(\hat y, y) = \begin{cases} 0 & y=\hat y  \\ 1 & y > \hat y \\ k &  y < \hat y, \end{cases} \label{lossfunc}
\end{equation}
where $k>0$ is the relative cost of false positive classifications. Note that any loss function that is minimized when $y = \hat y$ is equivalent to $\ell$ after an affine transformation.

Let us suppose the decision-maker might be able to observe group affiliation $A$ in addition to $\hat p$. 
To ensure that classifications are based only on $\hat p$ and not on $A$, we constrain $\hat p$ to satisfy \textit{calibration within groups},
\begin{equation}
\bb{E}[Y|A,\hat p] = \bb{E}[Y|\hat p] = \hat p.  \label{calibration}
\end{equation}
If \eqref{calibration} holds, the decision-maker's expected loss given $\hat p$ and $A$ becomes
\begin{equation}
\bb{E}[\ell(Y, \hat y) | \hat p, A] = \hat p(1-\hat y) + k (1-\hat p) \hat y. \label{calib-loss-characterization}
\end{equation}
This expected loss is minimized with a cutoff decision rule that is independent of group affiliation $A$,
\begin{equation}
\hat y = \mathbbm{1}
\{\hat p \ge \bar p\}, \label{optimal-rule-pbar}
\end{equation}
where the cutoff $\bar p = \nf{k}{(k+1)}$ is fixed by the decision-maker's loss function. 

Our second condition constrains $\hat y$ to satisfy \textit{equal error rates}, ensuring that the classification only depends on the group through the target variable. 
Following the decision rule (\ref{optimal-rule-pbar}), we may write this as
\begin{equation}
(\mathbbm{1}\{\hat p \ge \bar p\} \indep A) \; | \; Y. \label{new-parity}
\end{equation}
Our calibration and equal error rate conditions are summarized by \eqref{calibration} and (\ref{new-parity}), respectively. 

\subsection{Relation to Impossibility Results}
\label{impossibility}
We first introduce a general impossibility result, relate it to previous work, and show where our assumptions diverge to make our proposed criteria satisfiable. The following theorem proves that a \textit{single} algorithmic output $Z$ cannot generally satisfy notions of both calibration and equal error rates. 

\begin{theorem} \label{impossibility-result}
Let $Y, A,$ and $Z$ be random variables satisfying the following three conditions.
\begin{romanenumerate}
\item $(Y \indep A) \;|\; Z,$ \label{gen-cal}
\item $(Z \indep A) \;|\; Y,$  \label{gen-eo}
\item $\mathbb{P}(A=H|Z),\, \mathbb{P}(Y=1|A,Z)\in (0,1)$. \label{pos-def-cond}
\end{romanenumerate}
Then $A$ and $(Z,Y)$ must be independent. 
\end{theorem}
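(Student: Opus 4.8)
The plan is to reduce everything to statements about the single conditional probability $\pi(Z) := \mathbb{P}(A=H \mid Z)$, show that the two fairness conditions force $\pi$ to be a constant, and then observe that the same constant survives further conditioning on $Y$ — which is exactly joint independence.

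First I would rewrite the hypotheses in terms of conditional probabilities. Condition \ref{gen-eo} ($Z \indep A \mid Y$) is equivalent to $\mathbb{P}(A=H \mid Y, Z) = \mathbb{P}(A=H \mid Y)$ a.s.; since $Y$ is binary, the right-hand side takes only the two values $\alpha_y := \mathbb{P}(A=H \mid Y=y)$ for $y \in \{0,1\}$. Condition \ref{gen-cal} ($Y \indep A \mid Z$) gives $\mathbb{P}(Y=1 \mid A, Z) = \mathbb{P}(Y=1 \mid Z) =: q(Z)$ a.s., and combined with the positivity in \ref{pos-def-cond} this shows $q(Z) \in (0,1)$ a.s. I would also note here that \ref{pos-def-cond} guarantees $\pi(Z) \in (0,1)$, so conditioning on $\{A=a\}$ jointly with $Z$ is nondegenerate and the manipulations below are legitimate.

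Next I would apply the chain rule to $\mathbb{P}(A=H, Y=1 \mid Z)$ in two ways. Factoring out $Y$ first gives $\mathbb{P}(A=H \mid Z, Y=1)\,\mathbb{P}(Y=1 \mid Z) = \alpha_1\, q(Z)$, while factoring out $A$ first and invoking \ref{gen-cal} gives $\mathbb{P}(Y=1 \mid A=H, Z)\,\mathbb{P}(A=H \mid Z) = q(Z)\,\pi(Z)$. Equating and dividing by $q(Z) > 0$ yields $\pi(Z) = \alpha_1$ a.s. Repeating the argument with $Y=0$ and dividing by $1 - q(Z) > 0$ yields $\pi(Z) = \alpha_0$ a.s. Hence $\alpha_0 = \alpha_1 =: \alpha$, the function $\pi$ is the constant $\alpha$, and taking expectations gives $\alpha = \mathbb{P}(A=H)$.

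Finally I would combine these facts: by \ref{gen-eo}, $\mathbb{P}(A=H \mid Z, Y) = \mathbb{P}(A=H \mid Y) = \alpha_Y = \alpha = \mathbb{P}(A=H)$ a.s., and since $A$ is binary this is exactly the statement that $A$ is independent of $(Z, Y)$. The only delicate point — and the step I would be most careful about — is the measure-theoretic bookkeeping: ensuring the relevant conditional probabilities are well defined and that the division by $q(Z)$ (resp. $1-q(Z)$) is valid, which is precisely what the positivity hypothesis \ref{pos-def-cond} provides. Beyond that, the argument is elementary, with no genuine combinatorial or analytic obstacle.
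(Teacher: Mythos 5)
Your proof is correct, but it takes a genuinely different route from the paper. The paper argues structurally: condition \ref{pos-def-cond} makes the joint law strictly positive, so by the Hammersley--Clifford theorem the pairwise conditional independences \ref{gen-cal} and \ref{gen-eo} are encoded by an undirected graph on $\set{Y,A,Z}$ with no edge $A$--$Y$ and no edge $A$--$Z$; in either remaining graph $A$ is separated from $\set{Y,Z}$, giving $A \indep (Z,Y)$. You instead compute directly with $\pi(Z)=\bb{P}(A=H\mid Z)$, showing via the two factorizations of $\bb{P}(A=H, Y=y\mid Z)$ that $\pi(Z)=\bb{P}(A=H\mid Y=1)=\bb{P}(A=H\mid Y=0)$ a.s., and then that $\bb{P}(A=H\mid Z,Y)$ is the constant $\bb{P}(A=H)$, which for binary $A$ is joint independence. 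The paper's argument is shorter and conceptually transparent, but it leans on Markov random field machinery and on the full strength of \ref{pos-def-cond} (and, taken literally, on the positivity/discreteness setting in which Hammersley--Clifford is usually stated). Your computation is self-contained, valid for arbitrary $Z$, and in fact only uses $\bb{P}(Y=1\mid Z)\in(0,1)$ a.s.\ --- the positivity of $\bb{P}(A=H\mid Z)$ never enters --- so it already yields the refinement that the paper relegates to the appendix (where \ref{pos-def-cond} is weakened to $\Var(Y\mid Z)>0$ via a separate perturbation argument with $A_\eta$), with no extra work. One small caveat: your argument uses that $A$ and $Y$ are binary (as in the paper's setting $A\in\set{H,L}$, $Y\in\set{0,1}$), which is implicit in the theorem's condition \ref{pos-def-cond} but worth stating; you should also record that \ref{pos-def-cond} forces $\bb{P}(Y=y)>0$ so that $\alpha_y$ is well defined, which you essentially do when you flag the measure-theoretic bookkeeping.
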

\begin{proof}
Suppose that $(Y, A, Z)$ satisfy \eqref{gen-cal} \eqref{gen-eo} and \eqref{pos-def-cond}. Assumption \eqref{pos-def-cond} implies that the law of $(A,Y,Z)$ is strictly positive. By the Hammersley-Clifford theorem (see e.g.~\cite{10.5555/1592967}), the conditional independence relations are summarized by a graph on $\set{Y, A, Z}$ where every path from $Y$ to $A$ travels through $Z$, and every path from $A$ to $Z$ travels through $Y$. There are only two graphs with this property:
\begin{center}
\begin{tikzpicture}
\tikzstyle{every node}=[draw,circle,fill=white,minimum size=4pt,inner sep=0pt]
 \draw (0,0) node [label=above:$A$]{};
 \draw (1,0) node [label=above:$Z$]{}
	-- (2,0) node [label=above:$Y$]{};

\draw (0,-.8) node [label=above:$A$]{};
\draw (1,-.8) node [label=above:$Z$]{};
\draw (2,-.8) node [label=above:$Y$]{};
\end{tikzpicture}
\end{center}
In neither of these graphs does there exist a path from $A$ to $(Y,Z)$, so we conclude that $A$ and $(Y,Z)$ must be independent for \eqref{gen-cal} \eqref{gen-eo} and \eqref{pos-def-cond} to simultaneously hold.
\end{proof}
Note that when $A$ denotes group affiliation and $Y$ denotes outcomes, \eqref{gen-cal} is a form of calibration and \eqref{gen-eo} is a form of the equal error rate condition. Assumption \eqref{pos-def-cond} is a strong form of predictive uncertainty that is generalized in the appendix. Thus the theorem shows that when there is predictive uncertainty and $Y$ depends on $A$ (i.e.\ when the base rates are unequal), it is impossible for a single $Z$ to satisfy both calibration and equal error rates. For example, letting $Z$ be a classifier recovers the result of Chouldechova that \eqref{gen-cal} equal positive and negative predictive values are unachievable alongside \eqref{gen-eo} equal error rates \cite{chouldechova_fair_2017}. Meanwhile, letting $Z$ be a risk score shows that \eqref{gen-cal} calibration is unachievable alongside \eqref{gen-eo} a condition that implies balance in the positive and negative class, similar to the result of Kleinberg et al.~\cite{kleinberg_inherent_2016}.  

Our own setting bypasses the mathematical impossibility described in Theorem \ref{impossibility-result} by imposing constraints on \textit{two} separate algorithmic outputs rather than one. We require \eqref{gen-cal} calibration from the scores $\hat p$ and \eqref{gen-eo} equal error rates from the resulting classifications $\hat y =\mathbbm{1}\{\hat p \ge \bar p\} $.

\subsection{Necessary and Sufficient Conditions}
\label{sec:nec-suff}
In this section we characterize exactly when there exists a calibrated $\hat p$ that leads to equal error rate classifications $\hat y$ at the cutoff $\bar p$. Our conditions can be easily checked in a given setting, and they are shown to depend on the informativeness of the features $X$.

The graphical framework in this section builds on methods developed by Hardt et al.~\cite{hardt_equality_2016}. All the necessary and sufficient conditions will be illustrated in $\bb{R}^2$, with true positive rates on the vertical axis and false positive rates on the horizontal. The \emph{feasible region} will be the set in $\bb{R}^2$ corresponding to error rates achievable by an equal error rate classifier $\hat y = \mathbbm{1}\{\hat p \ge \bar p\}$ where $\hat p$ is calibrated. 

We first study the entire set corresponding to equal error rate classifiers, without regard to calibration or the decision-maker's cutoff $\bar p$. Then we study the entire set corresponding to classifiers that can be based on the cutoff $\bar p$ applied to calibrated scores, without regard to the equal error rate condition. Finally, we prove that the intersection of these two sets determines feasibility of enforcing both conditions, and we characterize when the intersection is nonempty.    

\subsubsection{Classifiers Satisfying Equal Error Rates}
\label{eq-odds-sec}

We wish to identify the entire set of error rates in $\mathbb{R}^2$ achievable by classifiers with equal error rates. Hardt et al. \cite{hardt_equality_2016} succeeded in doing so, and we review and adapt their results in this subsection. To lay the groundwork for the geometric reasoning to follow, we first denote the group $A$ false positive rate and true positive rate associated with a given classifier $\hat y$ as a point in $\bb{R}^2$,  
\[\alpha(\hat y, A) = \bigg( \bb{P}(\hat y = 1| Y=0, A),\, \bb{P}(\hat y = 1| Y=1, A) \bigg).\]
We may now define the set of achievable error rates in $\mathbb{R}^2$. Let $\mathcal{H}$ be the set of all possibly random classifiers $h(X,A)$.
The set of achievable error rates for group $A$ is 
\begin{equation}
S(A) = \Set[\alpha(\hat y, A)]{\hat y = h(X, A), h \in \mathcal{H}} \subseteq \bb{R}^2,
\end{equation}
and the set of achievable rates for all classifiers satisfying equal error rates is given by $S(L) \cap S(H)$. To better understand this intersection, we characterize $S(A)$ in terms of Receiver Operator Characteristic (ROC) curves following Hardt et al.~\cite{hardt_equality_2016}. By definition, an ROC curve of a given score $p$ traces the true and false positive rates associated with each possible cutoff rule $\mathbbm{1}\{p \ge c\}$ for $c \in [0,1]$. Therefore it contains all points $\alpha(\mathbbm{1}\{p \ge c\},A)$. With these tools in hand, we are ready to characterize the feasible set of rates $S(A)$ for group~$A$. 
\begin{proposition}\label{roc-feas-reg}
Let $p^* = p^*(X,A)$ be the Bayes optimal score satisfying $p^* = \mathbb{E}[Y|X,A]$, i.e., the best score given our data. Then the set of achievable rates $S(A)$ is exactly the convex hull of the union of the group-$A$ ROC curve of the best score $p^*$ and the group-$A$ ROC curve of the worst score $1-p^*$, i.e.~the convex hull of
\begin{equation*}
\begin{split}
& \Set[\beef \alpha(\mathbbm{1}\{p^* \ge c\},A)]{0 \le c \le 1}  \\
& \quad \cup  \Set[\beef (1,1) - \alpha(\mathbbm{1}\{p^* \ge c\},A)]{0 \le c \le 1}.
\end{split}
\end{equation*}
\end{proposition}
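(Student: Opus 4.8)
The plan is to establish two inclusions: the feasible set $S(A)$ contains the convex hull described, and $S(A)$ is contained in it. The key structural fact I will lean on is that $S(A)$ is automatically convex — given two classifiers achieving rates $r_1, r_2$, randomizing between them with probability $\lambda$ achieves $\lambda r_1 + (1-\lambda) r_2$, and this randomized rule still lies in $\mathcal H$. So it suffices to show that the set of \emph{extreme achievable rates} (before taking convex hull) is exactly the union of the two ROC curves, or more precisely that the convex hull of the achievable rates equals the convex hull of those two curves.

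For the ``easy'' inclusion ($\supseteq$): the ROC curve of $p^*$ consists of points $\alpha(\mathbbm 1\{p^* \ge c\}, A)$, each of which is achievable by a genuine (deterministic) classifier $h(X,A) = \mathbbm 1\{p^*(X,A) \ge c\} \in \mathcal H$, so it lies in $S(A)$. Similarly each point $(1,1) - \alpha(\mathbbm 1\{p^* \ge c\}, A) = \alpha(\mathbbm 1\{p^* < c\}, A)$ is achieved by the flipped classifier $\mathbbm 1\{p^*(X,A) < c\}$, which is also in $\mathcal H$. Since $S(A)$ is convex, it contains the convex hull of the union, giving one direction.

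For the ``hard'' inclusion ($\subseteq$) — and this is the main obstacle — I need to show that \emph{every} classifier's group-$A$ rate point lies in the convex hull of the two ROC curves. The idea is a Neyman–Pearson / likelihood-ratio argument: among all (randomized) classifiers with a given false positive rate $t$, the one maximizing the true positive rate thresholds on the likelihood ratio, which (since $A$ is held fixed and $p^* = \mathbb E[Y|X,A]$ is a monotone transform of that likelihood ratio within group $A$) is exactly a cutoff rule on $p^*$, possibly with randomization at the boundary — hence a point on the convex hull of the $p^*$-ROC curve. Dually, the classifier \emph{minimizing} the true positive rate at false positive rate $t$ thresholds the likelihood ratio the other way, yielding a cutoff rule on $1 - p^*$, i.e.\ a point on (the convex hull of) the second curve. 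Therefore for each fixed $t \in [0,1]$ the achievable true positive rates form an interval whose endpoints lie on the upper and lower curves respectively, so the whole vertical slice of $S(A)$ at $t$ is sandwiched inside the convex hull of the union of the two curves. Taking the union over $t$ and invoking convexity of both $S(A)$ and the hull closes the argument. I would cite \citet{hardt_equality_2016} for the underlying ROC characterization and present the likelihood-ratio optimality as the one step needing care, being explicit that conditioning throughout on $A$ is what makes the cutoff-on-$p^*$ claim valid despite $p^*$ depending on $A$.
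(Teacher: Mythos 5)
Your proposal is correct, and its first half (convexity of $S(A)$ via randomization, plus the observation that cutoff rules on $p^*$ and their complements realize the two ROC curves, so their convex hull sits inside $S(A)$) coincides with the paper's argument. Where you genuinely diverge is the reverse containment. You fix a false positive rate $t$ and invoke the Neyman--Pearson lemma: the maximal (resp.\ minimal) true positive rate at level $t$ is attained by a possibly boundary-randomized threshold on the likelihood ratio, which within group $A$ is a monotone transform of $p^*$, so each vertical slice of $S(A)$ is sandwiched between points of the upper and lower curves' hulls. The paper instead argues via support functionals: it considers extrema of arbitrary linear functionals $\gamma_1\alpha_1(\hat y,A)+\gamma_2\alpha_2(\hat y,A)$ over classifiers, invokes its appendix proposition on the sufficiency of post-processing (an adaptation of Proposition 5.2 of \citet{hardt_equality_2016}) to reduce to classifiers depending only on $(p^*,A)$, concludes that the extrema are the cutoff rules $\mathbbm{1}\{p^*\ge c\}$ and $\mathbbm{1}\{p^*<c\}$, and finishes with the fact that a convex set containing all of its support points equals the convex hull of those points. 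The two routes are essentially dual --- your constrained (Neyman--Pearson) formulation is the Lagrangian/weighted-cost problem the paper solves --- but they buy different things: your argument handles arbitrary randomized classifiers $h(X,A)$ directly and needs no separate reduction to functions of $(p^*,A)$, while the paper's avoids the boundary-randomization bookkeeping and reuses machinery it needs elsewhere. Both treatments share the same glossed-over technicality, namely that boundary-randomized thresholds and slice-wise suprema can land on limit points of the ROC curves, so a fully rigorous statement would work with closed convex hulls; this is not a defect specific to your approach.
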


Figure \ref{fig1} illustrates typical examples of $S(L)$, $S(H)$, and the intersection $S(L) \cap S(H)$ which represents the rates achievable by equal error rate classifiers.

\begin{figure*}
\begin{minipage}[t]{0.49\textwidth}
\begin{figure}[H]
\centering
\includegraphics[height=2in]{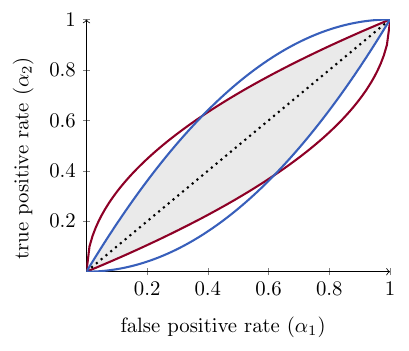}
\caption{\small Achievable equal error rates (shaded). Two pairs of ROC curves form the boundaries of $S(L)$ and $S(H)$. Points in the intersection $S(L) \cap S(H)$ correspond to equal error rate classifiers.}\label{fig1}
\end{figure}
\end{minipage}
\hfill
\begin{minipage}[t]{0.49\textwidth}
\begin{figure}[H]
\centering
\includegraphics[height=2in]{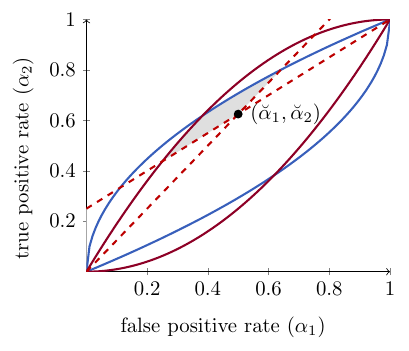}
\caption{\small Achievable equal error rates from calibrated score at cutoff $\bar p$ (shaded). The restrictions \eqref{npv-ppv-thm} correspond to half-spaces above the red dashed lines.}\label{fig2} 
\end{figure}
\end{minipage}
\end{figure*}

\subsubsection{Classifiers Compatible with Calibration }
\label{calibration-sec}

We now put aside the equal error rate constraint and concentrate on identifying the entire set of classifiers that are implementable with the cutoff $\bar p$ applied to some calibrated scores $\hat p$. The set is characterized by the following proposition. 

\begin{proposition} \label{ppv-npv-prelim}
A classifier $\hat y$ can be written as $\hat y = \mathbbm{1}\{\hat p \ge \bar p\}$ for some calibrated $\hat p$ if and only if its group-specific positive predictive values exceed $\bar p$, and its group-specific negative predictive values  exceed $1-\bar p$. In particular, for $ A \in \{ L, H\}$, 
\begin{equation}
\mathbb{P}(Y=1 | \hat y =1, A) \ge \bar p, \quad \mathbb{P}(Y=0 | \hat y =0, A) > 1- \bar p \label{npv-ppv-prelim}.
\end{equation}
\end{proposition}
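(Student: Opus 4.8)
The plan is to establish the two implications separately, with the heart of the argument being an explicit construction of $\hat p$ for the "if" direction.

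\emph{Necessity.} First I would suppose $\hat y = \mathbbm{1}\{\hat p \ge \bar p\}$ for some $\hat p$ satisfying \eqref{calibration}. On the event $\{\hat y = 1\} = \{\hat p \ge \bar p\}$ the calibration identity gives $\mathbb{E}[Y \mid \hat p, A] = \hat p \ge \bar p$ pointwise; since $\{\hat y = 1, A = a\} \in \sigma(\hat p, A)$, the tower property yields $\mathbb{P}(Y = 1 \mid \hat y = 1, A) = \mathbb{E}[\hat p \mid \hat y = 1, A] \ge \bar p$. Symmetrically, on $\{\hat y = 0\} = \{\hat p < \bar p\}$ we have $\hat p < \bar p$ pointwise, so $\bar p - \hat p$ is strictly positive on a set of positive probability and $\mathbb{P}(Y = 1 \mid \hat y = 0, A) = \mathbb{E}[\hat p \mid \hat y = 0, A] < \bar p$, i.e.\ $\mathbb{P}(Y = 0 \mid \hat y = 0, A) > 1 - \bar p$. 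The asymmetry between the weak and strict inequalities in \eqref{npv-ppv-prelim} is precisely the asymmetry between the closed half-line $\{\hat p \ge \bar p\}$ and the open half-line $\{\hat p < \bar p\}$ appearing in the decision rule \eqref{optimal-rule-pbar}.

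\emph{Sufficiency.} Conversely, assuming \eqref{npv-ppv-prelim}, I would define $\hat p := \mathbb{E}[Y \mid \hat y, A]$. To check that this $\hat p$ is calibrated, set $\mathcal{G} := \sigma(\hat y, A)$ and note the inclusions $\sigma(\hat p) \subseteq \sigma(\hat p, A) \subseteq \mathcal{G}$, since $A$ and $\hat p = \mathbb{E}[Y \mid \mathcal{G}]$ are both $\mathcal{G}$-measurable. Applying the tower property across these sub-$\sigma$-algebras gives $\mathbb{E}[Y \mid \hat p] = \mathbb{E}[\hat p \mid \hat p] = \hat p$ and $\mathbb{E}[Y \mid A, \hat p] = \mathbb{E}[\hat p \mid A, \hat p] = \hat p$, which is exactly \eqref{calibration}; this argument is valid even when some of the (at most four) conditional means $\mathbb{E}[Y \mid \hat y = i, A = a]$ happen to coincide, so no case analysis is needed. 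Finally I would verify $\hat y = \mathbbm{1}\{\hat p \ge \bar p\}$: on $\{\hat y = 1\}$ we have $\hat p = \mathbb{P}(Y = 1 \mid \hat y = 1, A) \ge \bar p$, and on $\{\hat y = 0\}$ we have $\hat p = \mathbb{P}(Y = 1 \mid \hat y = 0, A) = 1 - \mathbb{P}(Y = 0 \mid \hat y = 0, A) < \bar p$, so $\{\hat p \ge \bar p\}$ and $\{\hat y = 1\}$ agree up to a null set.

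I expect the one genuinely delicate point to be the verification that the constructed $\hat p$ satisfies \emph{both} equalities in \eqref{calibration}: the clean route is the $\sigma$-algebra inclusion argument above rather than arguing pointwise on the possible values of $\hat p$, since two of those values could coincide across groups and thereby collapse the relevant conditioning. A secondary bookkeeping issue is degeneracy: if $\mathbb{P}(\hat y = 1, A = a) = 0$ for some group, the corresponding positive predictive value is vacuous, and one should read both the hypotheses and the conclusion with the usual convention that statements conditioned on null events hold trivially; the construction then degenerates correctly, e.g.\ on $\{A = a\}$ one obtains $\hat p = \mathbb{E}[Y \mid A = a] < \bar p$ directly from the negative predictive value bound.
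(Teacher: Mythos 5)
Your proof is correct and follows essentially the same route as the paper's: necessity via the calibration identity $\mathbb{E}[Y\mid \hat p, A]=\hat p$ applied on the events $\{\hat p \ge \bar p\}$ and $\{\hat p < \bar p\}$, and sufficiency via the same construction $\hat p = \mathbb{P}(Y=1\mid \hat y, A)$, which the paper simply declares calibrated by construction. Your tower-property/$\sigma$-algebra verification of both equalities in \eqref{calibration} and your handling of coinciding values and null events just make explicit the details the paper leaves implicit.
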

\begin{proof}
Suppose that $\hat y = \mathbbm{1}\{\hat p \ge \bar p\}$ where $\hat p$ is calibrated. Then $\hat y$ must satisfy the inequalities
\begin{align}
\begin{split}
\bb{P}(Y=1|\hat y = 1, A) 
&= \bb{E}[Y|\hat p \ge \bar p, A] \\
&= \bb{E}[\hat p| \hat p \ge \bar p, A] \ge \bar p, \label{ppv-cond} 
\end{split}\\
\bb{P}(Y=1|\hat y = 0, A) 
&= \bb{E}[\hat p| \hat p < \bar p, A] < \bar p. \label{npv-cond}
\end{align}
Therefore, if $\hat y$ is based on a calibrated score $\hat p$ at cutoff $\bar p$, then it is necessary for the group-specific positive and negative predictive values to exceed $\bar p$ and $(1 - \bar p)$, respectively. 

Conversely, given \emph{any} classifier $\hat y$ that satisfies the inequalities \eqref{ppv-cond} and \eqref{npv-cond}, we can always put
\[\hat p(\hat y, A) = \bb{P}(Y=1| \hat y, A)\] 
to obtain a calibrated score that takes just two possible values per group with the cutoff $\bar p$ guaranteed to be between them. This choice of $\hat p$ thus satisfies $\hat y = \mathbbm{1}\{\hat p \ge \bar p\}$ by construction. 
\end{proof}
As we will see in the following subsection, this result lays the foundation for the necessary and sufficient conditions for the satisfiability of our fairness criteria. 

\subsubsection{The Feasibility Region}
\label{feasible-sec}

Proposition \ref{ppv-npv-prelim} demonstrates that the following are equivalent: 
\begin{romanenumerate}
\item \label{cond-equiv-i} There exists a calibrated score $\hat p$ such that $\hat y = \mathbbm{1}\{\hat p \ge \bar p\}$ satisfies equal error rates.
\item \label{cond-equiv-ii} There exists a classifier $\hat y$ satisfying equal error rates and \eqref{npv-ppv-prelim}.
\end{romanenumerate}

In practice, we propose checking \eqref{cond-equiv-ii} to identify whether \eqref{cond-equiv-i} holds. To do so, we use Bayes' rule to write \eqref{npv-ppv-prelim} as group-specific restrictions on true and false positive rates so that we can consider them in the same space as the equal error rate constraints given by Hardt et al.~\cite{hardt_equality_2016}. The following theorem and the accompanying Figure \ref{fig2} indicate that each restriction \eqref{npv-ppv-prelim} corresponds to a half-space in $\mathbb{R}^2$, and that the feasibile region corresponds to the intersection of those half-spaces with each other and with the equal error rates region $S(L) \cap S(H)$.
\begin{theorem} \label{characterizaton}
Let $\beta_A =\mu_A/(1-\mu_A)$ denote the group-specific odds ratios, with $\beta_L < \beta_H$. Then our fairness criteria are simultaneously satisfiable at cutoff $\bar p $ if and only if there exists $(\alpha_1,\alpha_1) \in S(L) \cap S(H)$ satisfying the two inequalities
\begin{equation}
\f{\alpha_2}{\alpha_1} \ge \f{\bar p}{\beta_L(1 - \bar p)}, \quad 
\f {(1-\alpha_1)}{(1-\alpha_2)} > \f{\beta_H(1-\bar p)}{\bar p}. \label{npv-ppv-thm}
\end{equation}
\end{theorem}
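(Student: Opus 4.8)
The plan is to build directly on the equivalence of conditions \ref{cond-equiv-i} and \ref{cond-equiv-ii} that Proposition \ref{ppv-npv-prelim} established: our fairness criteria are satisfiable at cutoff $\bar p$ if and only if there is a classifier $\hat y$ satisfying equal error rates together with the predictive-value inequalities \eqref{npv-ppv-prelim}. So it suffices to re-express both ``equal error rates'' and ``\eqref{npv-ppv-prelim}'' as conditions on a single point $(\alpha_1,\alpha_2)\in\bb{R}^2$ of shared false/true positive rates, and then check that these amount to $(\alpha_1,\alpha_2)\in S(L)\cap S(H)$ plus the two inequalities \eqref{npv-ppv-thm}.

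First I would handle the equal-error-rate part. A classifier has equal error rates exactly when its two rate points coincide, $\alpha(\hat y,L)=\alpha(\hat y,H)=:(\alpha_1,\alpha_2)$, and by definition of $S(A)$ this forces $(\alpha_1,\alpha_2)\in S(L)\cap S(H)$. Conversely, given any $(\alpha_1,\alpha_2)\in S(L)\cap S(H)$, pick $h_L,h_H\in\mathcal{H}$ realizing it on the two groups and splice them into the classifier $h(X,A)$ that runs $h_L$ when $A=L$ and $h_H$ when $A=H$; since $\mathcal{H}$ contains $A$-dependent classifiers this $h$ is legitimate and has equal error rates with rate point $(\alpha_1,\alpha_2)$. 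Hence the equal-error-rate classifiers are precisely those whose rate point lies in $S(L)\cap S(H)$ (which, via Proposition \ref{roc-feas-reg}, is the region drawn in Figure \ref{fig1}).

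Next I would translate \eqref{npv-ppv-prelim} into the two displayed inequalities by Bayes' rule. Writing $\mu_A=\bb{E}[Y|A]$, a classifier with false/true positive rates $(\alpha_1,\alpha_2)$ has group-$A$ positive predictive value $\alpha_2\mu_A/\bigl(\alpha_2\mu_A+\alpha_1(1-\mu_A)\bigr)$, so (cross-multiplying by the strictly positive denominator) $\bb{P}(Y=1\mid\hat y=1,A)\ge\bar p$ is equivalent to $\alpha_2/\alpha_1\ge\bar p/\bigl(\beta_A(1-\bar p)\bigr)$; similarly its negative predictive value is $(1-\alpha_1)(1-\mu_A)/\bigl((1-\alpha_1)(1-\mu_A)+(1-\alpha_2)\mu_A\bigr)$, and $\bb{P}(Y=0\mid\hat y=0,A)>1-\bar p$ is equivalent to $(1-\alpha_1)/(1-\alpha_2)>\beta_A(1-\bar p)/\bar p$. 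The one step requiring a small idea is the monotonicity in $\beta_A$: the right-hand side $\bar p/(\beta_A(1-\bar p))$ of the PPV bound is decreasing in $\beta_A$, so that constraint is tightest for group $L$, while $\beta_A(1-\bar p)/\bar p$ is increasing in $\beta_A$, so the NPV constraint is tightest for group $H$. Therefore imposing \eqref{npv-ppv-prelim} for both groups is equivalent to imposing just the group-$L$ PPV inequality and the group-$H$ NPV inequality, which are exactly the two inequalities of \eqref{npv-ppv-thm}. Combining this with the previous paragraph gives the characterization.

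I do not expect a genuine obstacle here; the residual care is bookkeeping. One should confirm the splicing classifier lies in $\mathcal{H}$ (it does, by the $A$-dependence built into the definition), keep track of strictness (the NPV inequality is strict, inherited from \eqref{npv-cond}, whereas the PPV one is not, from \eqref{ppv-cond}), and deal with the degenerate boundary points where $\alpha_1=0$ or $\alpha_2=1$ make a ratio ill-defined, which is handled by the usual convention identifying these with the trivial all-negative / all-positive rule on a group. All the cross-multiplications are routine since every denominator involved ($\mu_A$, $1-\mu_A$, $\bar p$, $1-\bar p$, and the predictive-value denominators) is strictly positive.
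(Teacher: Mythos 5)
Your proposal is correct and follows essentially the same route as the paper's own proof: reduce via Proposition \ref{ppv-npv-prelim} to finding an equal error rate classifier satisfying \eqref{npv-ppv-prelim}, identify such classifiers with points of $S(L)\cap S(H)$, translate the PPV/NPV bounds by Bayes' rule into ratio inequalities on $(\alpha_1,\alpha_2)$, and use monotonicity in $\beta_A$ to keep only the group-$L$ PPV and group-$H$ NPV constraints. Your extra care about the splicing classifier and the degenerate points $\alpha_1=0$, $\alpha_2=1$ matches the paper's brief remarks on the corner solutions $(0,0)$ and $(1,1)$.
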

We next provide easily checkable necessary and sufficient conditions for when the feasible region is nonempty. 
\begin{corollary} \label{checkable-cond}
Let $(\breve{\alpha_1}, \breve{\alpha_2})$ denote the point at which the inequalities \eqref{npv-ppv-thm} hold with equality. 
Our fairness criteria are simultaneously satisfiable at cutoff $\bar p$ if and only if any of the following holds: $\breve{\alpha_1}\leq 0$, $\breve{\alpha_1}\geq 1$, or both groups' ROC curves corresponding to $p^*$ lie above $(\breve{\alpha_1}, \breve{\alpha_2})$. Note that $(\breve{\alpha_1}, \breve{\alpha_2})$ are fixed by the group base rates and decision-maker's cutoff $\bar p$, 
\begin{equation}
\breve{\alpha_1} = \frac{\beta_L}{\left(\beta_H-\beta_L \right)} \left( \frac{ \beta_H - \left( 1+\beta_H \right) \bar p }{ \bar p} \right), \qquad
\breve{\alpha_2} = \frac{1}{\left(\beta_H-\beta_L \right)} \left( \frac{\beta_H \left(1 - \bar p \right) - \bar p}{1-\bar p} \right). 
\label{breve}
\end{equation}
\end{corollary}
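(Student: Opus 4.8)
The plan is to use Theorem~\ref{characterizaton} to reduce everything to a question about where $S(L)\cap S(H)$ sits relative to the two half-planes cut out by \eqref{npv-ppv-thm}, and then to show that that question can be decided by inspecting a single abscissa, namely $\breve\alpha_1$. First I would record the elementary geometry of \eqref{npv-ppv-thm}: the first inequality is the closed half-plane on and above the line $\ell_L$ through the origin of slope $\bar p/(\beta_L(1-\bar p))$, and the second is the open half-plane strictly above the line $\ell_H$ through $(1,1)$ of slope $\bar p/(\beta_H(1-\bar p))$. Since $\beta_L<\beta_H$, $\ell_L$ is strictly steeper than $\ell_H$, so the two lines meet in exactly one point; solving the two equations-with-equality (a routine $2\times2$ substitution) gives this intersection to be $(\breve\alpha_1,\breve\alpha_2)$ as in \eqref{breve}. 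Because of the slope ordering, for $\alpha_1<\breve\alpha_1$ the binding line is $\ell_H$ and for $\alpha_1>\breve\alpha_1$ it is $\ell_L$, so \eqref{npv-ppv-thm} is exactly the set of points lying above the increasing, piecewise-linear graph $\max(\ell_L,\ell_H)$, whose unique kink is at $(\breve\alpha_1,\breve\alpha_2)$.

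Next I would dispatch the two ``corner'' regimes. A short rearrangement of \eqref{breve} (using $\bar p/(1-\bar p)=k$ and $\beta_A=\mu_A/(1-\mu_A)$) shows $\breve\alpha_1\le0\iff\mu_H\le\bar p$ and $\breve\alpha_1\ge1\iff\mu_L\ge\bar p$. When $\mu_H\le\bar p$, the constant within-group score $\hat p=\mu_A$ is calibrated in the sense of \eqref{calibration}, yields the ``reject everyone'' classifier, and this classifier has false and true positive rates $(0,0)$ for both groups and hence trivially satisfies equal error rates; so our criteria are satisfiable. Symmetrically, when $\mu_L\ge\bar p$ the score $\hat p=\mu_A$ yields the ``accept everyone'' classifier, whose positive predictive values equal $\mu_A\ge\bar p$ and whose negative predictive values are vacuous, so Proposition~\ref{ppv-npv-prelim} applies and our criteria are again satisfiable. (These two clauses come essentially for free because at the two corners of $[0,1]^2$ one of the predictive values in \eqref{npv-ppv-prelim} is undefined rather than constrained; the equality cases $\mu_H=\bar p$ and $\mu_L=\bar p$ hinge on the precise strict/weak form of \eqref{npv-ppv-prelim} and need separate care.)

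The substantive case is $0<\breve\alpha_1<1$, equivalently $\mu_L<\bar p<\mu_H$, so that $(\breve\alpha_1,\breve\alpha_2)$ lies in the interior of the unit square. Here I would first note that, because $p^*$ is (the best) calibrated score, its group-$A$ ROC curve has slope $\tfrac{c}{1-c}\cdot\tfrac{1-\mu_A}{\mu_A}$ at threshold $c$, which decreases as the curve is traversed from $(0,0)$ to $(1,1)$; hence this ROC curve is concave and, by Proposition~\ref{roc-feas-reg}, is exactly the upper boundary $U_A$ of $S(A)$, so $\min(U_L,U_H)$ is the upper boundary of $S(L)\cap S(H)$. For the ``if'' direction, suppose both ROC curves lie (strictly) above $(\breve\alpha_1,\breve\alpha_2)$, i.e.\ $U_A(\breve\alpha_1)>\breve\alpha_2$ for $A\in\{L,H\}$. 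Every point $(\breve\alpha_1,\breve\alpha_2+\delta)$ with $\delta>0$ small satisfies both inequalities of \eqref{npv-ppv-thm} strictly, lies below each $U_A(\breve\alpha_1)$, and lies above the lower boundary of each $S(A)$ (since $\breve\alpha_2=\tfrac{\bar p}{\beta_L(1-\bar p)}\breve\alpha_1>\breve\alpha_1$, as $\mu_L<\bar p$ forces that slope to exceed $1$); hence it belongs to $S(L)\cap S(H)$ and witnesses feasibility via Theorem~\ref{characterizaton}. For the ``only if'' direction, let $(\alpha_1,\alpha_2)\in S(L)\cap S(H)$ satisfy \eqref{npv-ppv-thm} and fix $A$, so $\alpha_2\le U_A(\alpha_1)$. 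If $\alpha_1\ge\breve\alpha_1$, the first inequality and the fact that $t\mapsto U_A(t)/t$ is nonincreasing (concavity of $U_A$ with $U_A(0)=0$) give $U_A(\breve\alpha_1)/\breve\alpha_1\ge U_A(\alpha_1)/\alpha_1\ge\tfrac{\bar p}{\beta_L(1-\bar p)}$, i.e.\ $U_A(\breve\alpha_1)\ge\breve\alpha_2$; if $\alpha_1\le\breve\alpha_1$, the second inequality and the fact that $t\mapsto\tfrac{1-U_A(t)}{1-t}$ is nonincreasing (concavity of $U_A$ with $U_A(1)=1$), combined with the equality form of \eqref{npv-ppv-thm} at $(\breve\alpha_1,\breve\alpha_2)$, give $\tfrac{1-U_A(\breve\alpha_1)}{1-\breve\alpha_1}\le\tfrac{1-\breve\alpha_2}{1-\breve\alpha_1}$, i.e.\ again $U_A(\breve\alpha_1)\ge\breve\alpha_2$. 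Applying this to both groups gives the stated condition.

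I expect the main work to be twofold. The conceptual crux is the reduction just sketched: showing that ``some point of $S(L)\cap S(H)$ lies in the wedge'' is equivalent to a test at the single point $\breve\alpha_1$, which relies on the fact that $\ell_L$ passes through $(0,0)$ and $\ell_H$ through $(1,1)$, so that the relevant monotone quantities are the chord slopes $U_A(t)/t$ and $(1-U_A(t))/(1-t)$, together with concavity of the ROC curves of $p^*$. The tedious-but-necessary part is the bookkeeping at the equality boundaries --- strict versus weak inequalities in \eqref{npv-ppv-prelim} and \eqref{npv-ppv-thm}, the corner classifiers, and upgrading the ``only if'' conclusion to the strict inequality that makes the characterization tight --- which is where one must verify that the three stated clauses are exactly exhaustive.
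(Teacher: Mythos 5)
Your proposal is correct and shares the paper's overall architecture: reduce to Theorem \ref{characterizaton}, observe that the two constraints in \eqref{npv-ppv-thm} are half-planes bounded by a line through $(0,0)$ and a line through $(1,1)$ meeting at \eqref{breve}, obtain sufficiency by moving vertically up from the kink onto the lower of the two ROC curves, and handle $\breve\alpha_1\le 0$ and $\breve\alpha_1\ge 1$ via the corner points (the paper checks $(0,0)$ and $(1,1)$ geometrically; your explicit constant-score, reject-everyone/accept-everyone classifiers are the same points in classifier language, with the added bonus of the clean translations $\breve\alpha_1\le 0\iff\mu_H\le\bar p$ and $\breve\alpha_1\ge 1\iff\mu_L\ge\bar p$). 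Where you genuinely diverge is the necessity direction. The paper argues by separation: using convexity of $S(L)\cap S(H)$, the positive slopes of the two lines, and the fact that one passes through $(0,0)$ and the other through $(1,1)$, it shows that when the kink lies above an ROC curve the two half-space intersections $I_F$ and $I_G$ are disjoint, so no feasible point exists. You instead argue the contrapositive pointwise: taking any feasible $(\alpha_1,\alpha_2)$, you use concavity of the group-$A$ ROC curve of $p^*$ (so that it is the upper boundary $U_A$ of $S(A)$) together with monotonicity of the chord slopes $U_A(t)/t$ and $(1-U_A(t))/(1-t)$ to push the constraint from $\alpha_1$ to $\breve\alpha_1$ and conclude $U_A(\breve\alpha_1)\ge\breve\alpha_2$ for both groups. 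Your route buys a more quantitative, self-contained conclusion (it identifies exactly which inequality transports the information on each side of the kink), at the price of explicitly invoking ROC concavity of the Bayes-optimal score, which the paper's separation argument does not need to name; the paper's route is shorter but leaves more of the boundary-of-$S(L)\cap S(H)$ geometry implicit. Your flagged-but-unresolved equality cases (e.g.\ $\mu_H=\bar p$, strict versus weak inequalities at the corners) are treated with the same looseness in the paper's own proof, so they are not a gap relative to it.
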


We note that the feasible region depends on the decision-maker's cutoff $\bar p$, which in turn depends on their relative valuation of false positive and false negative classifications, $k$. In particular, when $k$ is either very large or close to 0, the set of feasible error rates shrinks to include only those corresponding to no positive classifications or no negative classifications.  

Data quality also contributes to the feasibility of enforcing both fairness criteria, as illustrated by Theorem \ref{characterizaton} and Figure \ref{fig2}. Note that the intersection of the half-spaces defined in \eqref{npv-ppv-thm} are fixed by given parameters: $\beta_L$, $\beta_H$, and $\bar p$. Beyond these, what determines the size of the feasible region is the height of the ROC curves.  

Higher ROC curves correspond to more accurate predictions, which can be achieved by including more informative features $X$. This expands the region $S(H) \cap S(L)$ and thus always weakens the constraints dictating whether equal error rates and calibration are compatible in a given setting. Therefore, increasing the quality of data that an algorithm can access promotes our notions of fairness, whereas removing data compromises them. 

\section{A Loss-Minimizing Algorithm} 
\label{sec:algo}

After checking that our fairness criteria are feasible in a given setting, a natural next step is to search for the constrained optimal solution, i.e.\ to identify the most accurate score $\hat{p}$ that minimizes the decision-maker's loss subject to our fairness constraints. Our strategy is to first estimate the most accurate score $p^* = \bb{E}[Y|X,A]$ without regard to fairness, and then to transform the estimate in two separate stages. \textcolor{black}{First, we identify the error rates that minimize loss subject to the fairness conditions (Section~3.1)}. Second, we identify the MSE-minimizing calibrated scores $\hat p$ that gives rise to those error rates at the decision-maker's cutoff $\bar p$ (Section~3.2). Lastly, we lay out extensions of the algorithm that can accommodate practical use cases (Section 3.3). 

\subsection{Stage 1: Error Rate Optimization}
\label{sec:err-rates-alg}

The first stage of the algorithm identifies feasible error rates that minimize the decision maker's loss. 

Let $R$ denote the set of points $(\alpha_1, \alpha_2)$ in the feasible region, i.e.~the pairs of error rates in $S(H) \cap S(L) $ that satisfy \eqref{npv-ppv-thm}. 
Note that $R$ is necessarily convex, as it is the intersection of four convex regions: $S(H)$, $S(L)$, and the half-spaces defined in \eqref{npv-ppv-thm}. Moreover, according to the decision-maker's loss function, a classifier corresponding to error rates $(\alpha_1,\alpha_2)$ obtains expected loss \begin{equation}
\ell(\alpha_1, \alpha_2) \equiv k\alpha_1(1-\bb E[Y]) + (1-\alpha_2)\bb E[Y] \label{feas}.
\end{equation}
Thus, straightforward convex optimization will identify the error rates that minimize the linear function $\ell$ over $(\alpha_1, \alpha_2) \in R$.
The optimal error rates identified, $z^* = (\alpha_1^*,\alpha_2^*)$, will be on the upper-left boundary of the feasible region in Figure \ref{fig2}, with the precise point determined by the decision-maker's relative preference $k$ over false positive and false negative classifications. 

\begin{algorithm}[H]
\caption{Find loss-minimizing feasible error rates.}
\label{alg:error_alg_feas}  
\begin{algorithmic}
\STATE {\bfseries Input:} Raw scores $\{p_i^*\}$, labels $\{Y_i\}$, group identities $\{A_i\}$, base rates $\mu_A$, cutoff $\bar p$, loss parameter $k$.
\STATE {\bfseries Step 1:} Define convex feasible region $R$ by taking intersection of rates $(\alpha_1,\alpha_2)$ in $S(L) \cap S(H)$ that satisfy (\ref{npv-ppv-thm}). To compute $S(L) \cap S(H)$, use $\{p_i^*\}$ to determine each group's ROC curves.
\IF{$R$ is empty}
\STATE {\bfseries Output:} No feasible solution.
\ENDIF
\STATE {\bfseries Step 2:} Minimize loss function (\ref{feas}) over $(\alpha_1, \alpha_2) \in R$. \STATE {\bfseries Output:} Optimal target rates $(\alpha_1^*, \alpha_2^*)$ from Step 2.
\end{algorithmic}
\end{algorithm}

\begin{remark} 
The sets $S(L)$ and $S(H)$ correspond to the Bayes optimal score $p^* = \mathbb{E}[Y|X,A]$, which needs to be estimated in practice. Given an estimated score $p$, we propose using a holdout sample to first calibrate $p$ and then perform our algorithm. The resulting scores will satisfy the fairness criteria approximately by a law-of-large-numbers argument, where the fidelity is determined solely by the size of the holdout sample (see e.g.~\cite{wegkamp2003}). 
\end{remark}
\subsection{Stage 2: Risk Score Optimization}
\label{sec:alg-scores}

\textcolor{black}{Once a feasible set of error rates is chosen,} the decision-maker's expected loss is determined. However, multiple choices of calibrated scores may achieve those target rates at the cutoff $\bar p$, and we expect that in practice, decision-makers would prefer more accurate scores. This section thus describes a method to recover the MSE-minimizing score $\hat p$ that implements the target rates \textcolor{black}{$z^*$} by solving a constrained optimal transport problem \cite{peyre2019computational}. 

We base the method on the finding that the best $\hat p$ satisfying the fairness criteria is recoverable through post-processing the Bayes optimal score $p^* = \mathbb{E}[Y|X,A]$. We include a proof for this in the appendix, following a similar argument of Hardt et al.~\cite{hardt_equality_2016}. In the appendix we also discuss how our procedure can be thought of as finding the smallest \textit{mean-preserving contraction} of $p^*$ that yields the targeted error rates. Readers will note that the post-processing procedure requires some randomization of input scores. We explore the effects of the randomization empirically in our online appendix \cite{reich2020possibility}, and meanwhile highlight that our algorithm's accuracy objective limits the extent to which scores $p^*$ change. 

Our method defines one linear program per group $A$ and seeks the most accurate $\hat p_A$ that yields  error rates at the cutoff $\bar p$ given by  \textcolor{black}{
\[\alpha(\mathbbm{1}\{\hat p_A \ge \bar p\}, A) = z^* = (\alpha^*_{1},\alpha^*_{2}).\]}For the remainder of the section, we simplify notation by suppressing $A$ subscripts and note that the procedure is performed once for each group $A \in \{H, L\}$.  

Our approach will involve a transformation kernel, or \emph{transport map}, that maps the distribution of the most accurate estimate of $p^*$ to the distribution of our post-processed $\hat p$. 
We assume for simplicity that the $p^*$ estimate has already been calibrated, and that it is discrete (which we justify in the appendix). In particular, $p^*$ takes $N$ ordered values $p = (p_1, p_2, \ldots, p_N)$, each with probability mass given by $s = (s_{1}, s_{2}, \ldots, s_{N})$ where $\sum_i s_{i} =1$. Furthermore, we will denote the post-processed $\hat p$ as taking those same discrete values $p$ but with different probability masses that we seek to optimize, $f = (f_{1}, f_{2}, \ldots, f_{N})$.

We call $T$ the matrix that maps probability masses from the discrete distribution of $p^*$ to that of $\hat p$. In particular, with probability $T_{ij}$, the kernel will map an individual with score $p_{i}$ to the output score $p_{j}$. Therefore, the probability distribution of $\hat p$ will be determined by 
\begin{equation} 
T^\top s = f. \label{alg:tsf}
\end{equation}
In order to produce probability distributions, $T$ must be right-stochastic: elements must take values between 0 and 1, and each row should sum to 1.
\textcolor{black}{
\begin{equation}0 \le T_{ij} \le 1 \text{ and } \sum_{k=1}^N T_{ik} = 1 \quad \forall\,i,j \in \{1,\ldots N\}. \label{alg:prob} \end{equation}}
According to our fairness criteria, we further constrain $T$. 
To ensure that $\hat p$ will be calibrated, we need the outcome of individuals assigned score $f_i$ to satisfy $Y=1$ with probability $p_i$. Given our assumption that $p^*$ has itself been calibrated, this reduces to
\begin{equation}
\sum_{i = 1}^N T_{ij} p_i s_{i}   = p_j f_j \quad \forall\, j \in \{1,\ldots,N\}.  \label{alg:calib}
\end{equation}
The targeted false- and true-positive rates $(\alpha_{1}^*, \alpha_{2}^*)$ derived in Section 3.1 similarly require:  
\begin{equation}
\begin{split}
\sum_{j=1}^N \sum_{i=1}^N  T_{ij} p_i s_{i} \left(\mathbbm{1}\{p_j \ge \bar p\} - \alpha_{2}^* \right) 
&= 0, \\
\sum_{j=1}^N \sum_{i=1}^N T_{ij} (1-p_i) s_{i} \left(\mathbbm{1}\{p_j \ge \bar p\} - \alpha_{1}^* \right)
&= 0.\end{split} \label{alg:fpr-tpr}
\end{equation}
Finally, we formulate an objective. Note that \textcolor{black}{
the mean-squared error of $\hat p$ satisfies the bias-variance decomposition
\[\bb{E}[(\hat p - Y)^2] = \bb{E}[(\hat p - \bb{E}[Y|X,A])^2] + \bb{E}[(Y- \bb{E}[Y|X,A])^2],\] 
and thus the $\hat{p}$ that minimizes the left hand side is obtained by minimizing the first term on the right hand side. In particular,} 
if the input score $p^*$ is $\bb{E}[Y|X,A]$, then the post-processed score that minimizes mean-squared error will also minimize 
\begin{equation}
\bb{E}[(\hat p - p^*)^2] = \sum_{i=1}^N\sum_{j=1}^N T_{ij}   (p_i -  p_j)^2 s_{i}. \label{surrogate}
\end{equation}
Furthermore, even if $p^*$ is not exactly equal to $\bb{E}[Y|X,A]$, the triangle inequality in $L^2(\bb P)$ implies 
\[\bb{E}[(\hat p - Y)^2]^{\frac 1 2} \le \bb{E}[(p^* - Y)^2]^{\frac 1 2} + \bb{E}[(\hat p - p^*)^2]^{\frac 1 2}.\]
Thus, by minimizing the objective \eqref{surrogate} we can effectively control the additional error due to post-processing. Combining this with the above constraints yields a straightforward linear program.

\begin{algorithm}[H]
\caption{For each group, find calibrated scores achieving target rates.}
\label{alg:scores}
\begin{algorithmic}
\STATE {\bfseries Input:} Raw scores $\{p_i^*\}$, number of bins $N$, target error rates from stage 1 of algorithm $(\alpha_{1}^*,\alpha_{2}^*)$, cutoff $\bar p$.
\STATE {\bfseries Step 1:} Produce discrete score approximation of $p^*$: label $N$ ordered values $(p_1, p_2, \ldots, p_N)$ with masses $(s_1, s_2, \ldots, s_N)$.
\STATE {\bfseries Step 2:} Find score transformation kernel $T$ that minimizes (\ref{surrogate}) subject to the constraints (\ref{alg:tsf}), (\ref{alg:prob}), (\ref{alg:calib}) and (\ref{alg:fpr-tpr}). \STATE {\bfseries Step 3:} Map each individual with given raw score to a new post-processed score, based on probabilities given by kernel $T$.
\STATE {\bfseries Output:}  Scores $\hat p$ from Step 3. By design these satisfy calibration and yield  error rates $(\alpha_{1}^*,\alpha_{2}^*)$ at cutoff $\bar p$.
\end{algorithmic}
\end{algorithm}

\subsection{Available Extensions}

Our procedure can be modified to handle additional use cases. We can flexibly trade off the fairness and accuracy objectives, minimize error disparities rather than eliminate them when the feasible region is empty, accommodate a setting where the decision-maker's cutoff $\bar p$ is estimated with error, and apply the procedure to more than two groups. 

\subsubsection{Relaxing the fairness criteria}

An alternative formulation of our algorithm can accommodate multiple cases encountered in practice. By modifying Stage 1 to include a weighted error-rate penalty, users can flexibly trade off the fairness and accuracy objectives, minimize error disparities rather than eliminate them when the feasible region $R$ is empty, and enforce just one error constraint as in the ``equality of opportunity'' criterion of Hardt et al.~\cite{hardt_equality_2016}. In general, the more flexible procedure will output group-specific optimal error rates: $z_L^*$ and $z_H^*$. These group-specific targets are then inputted into Stage 2 which is otherwise unchanged. 

To modify Stage 1, first we define a broader domain for the algorithm to search over in place of $R$. It contains all the error rates implementable by a calibrated score at the decision-maker's cutoff, according to the inequalities (\ref{npv-ppv-thm}), without regard to equal error rates. The domain is $R(H) \times R(L)$ where
\begin{equation} R(A) = \Set[(\alpha_1,\alpha_2) \in S(A)] { \frac{1-\alpha_2}{1-\alpha_1} < \frac{\bar p/\beta_A}{(1-\bar p)} \le \frac{\alpha_2}{\alpha_1} }.\end{equation}
(Note that this is guaranteed to be nonempty, as it contains the error rates of the classifier $\mathbbm{1}\{p^*\ge\bar p\}$.)
We also replace the loss function (\ref{feas}) with a generalized version that includes both the decision-maker's expected loss from the error rates as well as the groups' rate disparities. The new loss function is
\begin{equation}
\gamma \ell(z_L) + (1-\gamma)\ell(z_H) +  (z_L - z_H)^\top \Lambda (z_L-z_H)
\label{dual}
\end{equation}
where $\ell(z_A)$ is the decision-maker's  expected loss $k \alpha_{1A}(1-\bb{E}[Y|A]) +  (1-\alpha_{2A}) \bb{E}[Y|A]$ and $\gamma$ is the fraction of individuals in group $L$. Meanwhile, $\Lambda$ is a positive semidefinite matrix that provides the flexibility of varying the enforcement of minimal error rate differences. For example, taking $\Lambda = \lambda I$ for arbitrarily large $\lambda$ recovers the equal error rate solution when the feasible region $R$ is nonempty, and otherwise outputs the solution that minimizes error rate disparities. Meanwhile a small choice of $\lambda$ places relatively more weight on accuracy. 

Alternatively, $\Lambda$ could be chosen so that differences in the true and false positive rates are weighted differently. For example, we can achieve equal true positive rates and allow false positive rates to vary \cite{hardt_equality_2016} by letting $\Lambda(2,2)$ be large and assigning 0 to all other entries in $\Lambda$. 

As a result of the flexible procedure, group-specific error rates $z_L^*$ and $z_H^*$ are identified to minimize the generalized loss function (\ref{dual}). The second stage of the algorithm can then be applied to identify a calibrated score that yields those target rates. 

\subsubsection{Accommodating an interval of possible \texorpdfstring{\boldmath $k$}{k} or \texorpdfstring{\boldmath $\bar p$}{p ̄}}

In settings where the exact $\bar p$ is unknown or not fixed, users can adapt our algorithm to function for any cutoff in an interval $(\bar p - \epsilon, \bar p +\epsilon)$. It can be tailored to produce scores $\hat p$ that are either below $\bar p - \epsilon$ or above $\bar p + \epsilon$, so that any cutoff applied within the interval would execute the same classifications. 

In particular, we propose a couple modifications to generalize our algorithm to this setting. We wish for anyone receiving scores above $\bar p + \epsilon$ to be classified as $\hat y =1$ and anyone receiving scores below $\bar p - \epsilon$ as $\hat y =0$. Following the reasoning in Proposition \ref{ppv-npv-prelim}, for such a score to be calibrated, the associated PPV should exceed $\bar p + \epsilon$ and the NPV should exceed $1-(\bar p - \epsilon)$. Therefore, the feasible region previously defined in Theorem \ref{characterizaton} by (\ref{npv-ppv-thm}) is now defined by the points $(\alpha_1,\alpha_1) \in S(L) \cap S(H)$ that satisfy
\begin{align}
\f{\alpha_2}{\alpha_1} \ge \f{\bar p + \epsilon}{\beta_L(1 - (\bar p + \epsilon))}, \quad 
\f {(1-\alpha_1)}{(1-\alpha_2)}> \f{\beta_H(1-(\bar p-\epsilon))}{\bar p - \epsilon}. 
\end{align}

This feasible region is used in Stage 1. In Stage 2, we add another constraint to specify that no post-processed scores be assigned values inside the interval of possible cutoffs:
\begin{equation}
T_{ik}=0  \quad \forall k \text{ such that } p_k \in (\bar p - \epsilon, \bar p +\epsilon).
\end{equation}

The rest of the procedure remains unchanged. The cost of the added flexibility is a tighter feasible region and higher MSE of the final score.  

\subsubsection{Satisfying the criteria for more than two protected groups}

The algorithm can be modified to satisfy the fairness criteria for multiple groups, across multiple identifiers. First define each group as a unique combination of protected features. Then, the feasible set of error rates is given by the intersection of each $S(A)$ with the points satisfying the inequalities (\ref{npv-ppv-thm}) where $H$ is the highest-mean group and $L$ is the lowest-mean group. Stage 1 of the algorithm proceeds to find the optimal set of error rates in that feasible region. Stage 2 proceeds as usual, implementing a separate program for each group.  

\section{Empirical Results}

\begin{figure*}[t!]
\centering
\begin{subfigure}{0.32\textwidth}     
\centering
\includegraphics[height=1.75in]{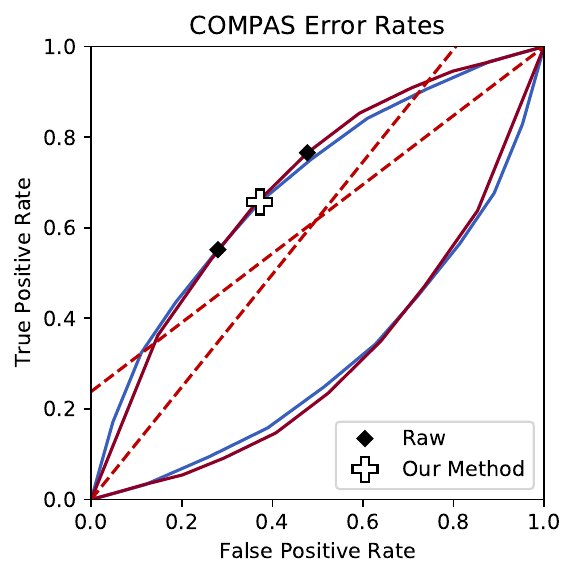}
\subcaption{}\label{3b}
\end{subfigure}\begin{subfigure}{0.32\textwidth}
\centering
\includegraphics[height=1.75in]{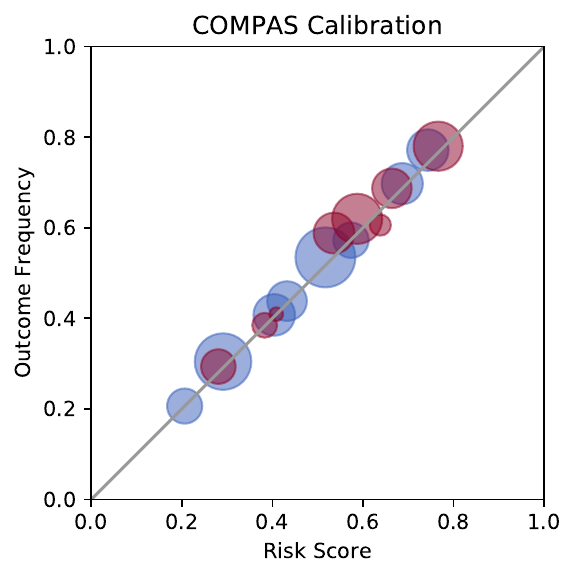}
\subcaption{}\label{3c}
\end{subfigure}
\begin{subfigure}{0.32\textwidth}
\centering
\includegraphics[height=1.75in]{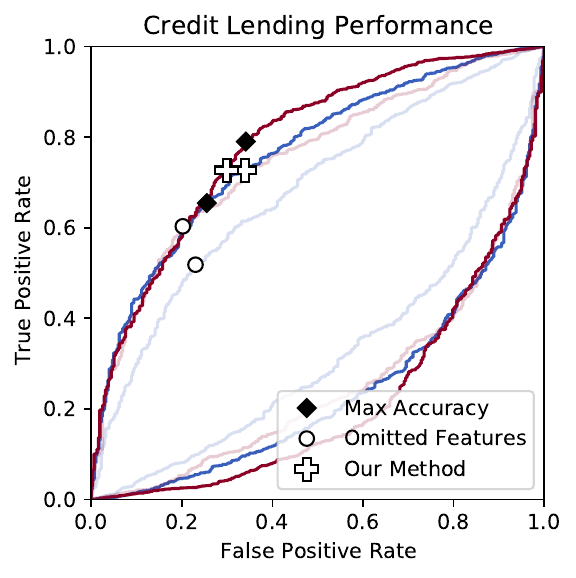}
\subcaption{}\label{3a}
\end{subfigure}
\caption{\small Evaluating algorithm performance. In each figure, maroon represents the high-mean group while blue represents the low-mean group. Panels (a) and (b) correspond to  
the criminal justice application, showing respectively that we can eliminate error rate disparities and maintain score calibration in COMPAS. Note that we define a true positive classification as correctly identifying someone who would not reoffend.
Panel (c) covers the  credit lending application, illustrating the empirical ROC curves from the rich feature set (opaque) and the limited feature set (translucent). Compared to a data omission strategy, our method raises the probability that creditworthy individuals from all education groups access loans.}
\end{figure*}

Let us take our procedure to data. In the first application, 
we post-process real COMPAS scores to demonstrate that risk assessments can be designed to output both calibrated risk scores as well as binary risk summaries satisfying equal error rates. 
Afterwards, 
we design a risk score to aid a  
lender's classification task to authorize loans, showing that it outperforms a common alternative strategy based on the omission of sensitive features. 
For interested readers, extensive detail about each application is presented in our online appendix \cite{reich2020possibility}. 

\pagebreak
\subsection{Predicting criminal recidivism}
Our procedure can design risk assessments that output both calibrated scores as well as binary ``high'' or ``low'' risk summaries satisfying equal error rates. We illustrate this in our first application, where we modify real criminal justice risk scores from COMPAS. As noted earlier, a \textit{ProPublica} investigation showed that current COMPAS scores yield error imbalances across race, although they satisfy predictive parity overall \cite{angwin_julia_machine_2016,angwin_julia_bias_2016}. 

To check whether we can correct COMPAS error imbalances without sacrificing score calibration, we applied our post-processing technique to Broward County risk scores made public by \textit{ProPublica} \cite{propublica_compas_data}. We define the outcome of interest as recidivism within two years, and we convert existing COMPAS scores that range from $[1, 10]$ to probabilities in $[0,1]$.
We define the classification cutoff as the minimum score of defendants classified as ``high risk'' in COMPAS, according to \textit{ProPublica}'s influential analysis \cite{larson_how_nodate}. This corresponds to a cutoff of $\bar p = 0.54$ and loss parameter $k = 1.17$. 

We compute the feasible region of achievable error rates according to Stage 1 of our algorithm and identify the loss-minimizing pair, as depicted in Figure \ref{3b}. Then, we use Stage 2 to post-process the COMPAS scores to achieve new calibrated scores yielding that optimal pair of error rates. The calibration of our scores is depicted in Figure \ref{3c}, where we group together by race defendants with the same post-processed scores and show that their corresponding recidivism outcomes lie on the main diagonal. Overall, our procedure eliminates the reported error disparities across racial groups (Figure \ref{3b}) while also preserving calibration (Figure \ref{3c}).  

\subsection{Predicting loan repayment }

We next present an example of designing a risk score to inform a credit lender’s approvals of loan applicants. Our goal is to deliver to the lender calibrated scores for applications from two groups\textemdash  one highly educated ($H$) and another less educated ($L$)\textemdash while ensuring that they yield classifications with equal group TPRs at the lender cutoff. That way, we know that qualified applicants will have the same probability of receiving a loan regardless of their education level. We suppose the lender in question views defaulting as highly costly and only authorizes loans to individuals with calibrated scores greater than $\approx .9$, corresponding to loss parameter $k=10$. 

We simulate this scenario by applying our algorithm to the Survey of Income and Program Participation (SIPP), a nationally-representative survey of the civilian population spanning multiple years \cite{datacite}. We select as our outcome the ability to pay rent, mortgage, and utilities in 2016, and predict that outcome using survey responses from two years prior. We label individuals with at most a secondary school education as $L$ and those with higher education as $H$.  

\enlargethispage{1.8\baselineskip}
The full dataset contains over 1,800 features spanning \textcolor{black}{detailed} financial variables (including work history, assets, and debts), as well as sensitive features (including demographic information). We apply our \textcolor{black}{algorithm} to the full feature set and derive calibrated scores that yield equal \textcolor{black}{TPRs at the lender's cutoff}, using our algorithm extension that allows FPRs to vary. Then, we compare its performance to two accuracy-maximizing procedures: one based on the full feature set, and another commonly-practiced approach based on the omission of sensitive features. 
The results are summarized numerically in Table \ref{t1} and graphically in Figure \ref{3a}. Compared to prediction on all features and no post-processing, our algorithm raises the TPR of $L$ and lowers that of $H$, while raising lender loss. Meanwhile, compared to the commonly used data omission strategy, our algorithm raises the probabilities that creditworthy applicants from \textit{both} education groups are granted loans, and lowers loss for the lender.  
\pagebreak

\begin{table*}
\caption{\small Application to credit lending. 
Row [1] is based on raw scores.  
Row [2] summarizes the classifier that minimizes lender loss subject to equal \textcolor{black}{true positive rates, given by the equal opportunity algorithm in Hardt et al. (2016). Row [3]  
summarizes our algorithm, which produces a calibrated score corresponding to equal \textcolor{black}{true positive rate} classifications}; since it retrieves the same error rates as row [2], we see there is no added loss from enforcing score calibration. 
Row~[4] summarizes the scores from the \textcolor{black}{alternative procedure that omits sensitive features}, displaying greater loss for the lender, lower true positive rates for both groups, and substantial error disparities across groups.}
\label{t1}
\centering
\begin{tabularx}{.95\textwidth}{ll|ccccr}
& Algorithmic Target & Lender Loss &  TPR (H/L)     &    FPR (H/L)   & Score MSE \\ \hline \hline 
&\emph{Trained on all features} &      &               &               & \\
$[1]$&Accuracy Maximizing& .517      & (.795/.661)   &  (.341/.255)  & .072               \\
$[2]$&Eq.~TPR Only   & .532      & (.727/.727)   &  (.299/.339)  & N/A                 \\
$[3]$&*Eq.~TPR + Calibration* & .532      & (.727/.727)   &  (.299/.339)  & .073  \\
\hline
&\emph{Trained on limited features} &   &               &               &                     \\
$[4]$&Accuracy Maximizing& .591     & (.603/.518)   &  (.202/.230)  & .077                \\
\end{tabularx}
\end{table*}

\section{Conclusion}

Decision-makers stand to benefit from algorithmic predictions. This paper studies fair prediction in the widespread setting in which a risk score is constructed to aid their classification tasks. We prove that it is possible to construct calibrated scores that lead to equal error rate classifications at group-blind cutoffs. We characterize exactly when it is possible and propose an algorithm that produces the most accurate score satisfying the fairness criteria and minimizing the decision-maker’s errors. Compared to a commonly practiced strategy of omitting sensitive data, we show that our algorithm can produce scores that enhance both efficiency and equity.


\bibliography{p004-LazarReich}

\appendix

\section{Appendix}
\subsection{Addendum to Theorem \ref{impossibility-result}}
\begin{proof}[Addendum]
We relax condition \eqref{pos-def-cond} of Theorem \ref{impossibility-result} and replace it with the weaker condition that $\Var(Y|Z) > \epsilon$ almost surely. This will correspond to the assumption that $Y$ cannot be perfectly predicted from any realization  of $Z$.

We will make use of the criterion that Borel random variables $R$ and $R'$ are independent conditional on $\Sigma$ iff for all bounded, continuous $f$ and $g$ we have
\[\bb{E}[f(R)g(R')|\Sigma] = \bb{E}[f(R)|\Sigma]\bb{E}[g(R')|\Sigma].\] 
Now suppose that $(Z,A,Y)$ are known to satisfy Theorem \ref{impossibility-result} conditions \eqref{gen-cal} and \eqref{gen-eo}, and that $\Var(Y|Z) > 0$. Then let $\eta$ be a $\mr{Ber}(\varepsilon)$ random variable independent of $(Z,A,Y)$. We consider a variable $A_\eta$ that takes value $A$ with probability $1-\varepsilon$ and otherwise flips the variable $A$ with probability $\varepsilon$, that is,
\begin{equation*}
A_\eta = A + \eta \pmod{2}.
\end{equation*} 
This gives us a triple $(Z, A_\eta, Y)$ that satisfies $\bb{E}[A|Z] \in (0, 1)$ and $\bb{E}[Y|A,Z] \in (0, 1)$ almost surely by construction, corresponding to condition \eqref{pos-def-cond} from the Theorem. We can also show that the triple satisfies the other two conditions. For instance, to show that condition \eqref{gen-cal} holds, let $S$ be an arbitrary set such that $ S \in \sigma(Z)$. 

We will use the fact that any $\sigma(Z)$-measurable random variable $V$ and any random variable $U$ satisfy $\bb{E}[\bb{E}[U|Z]V] = \bb{E}[UV]$. In particular,
\begin{align*}
\bb{E}\left[\bb{E}[f(A_\eta)g(Y)|Z]\mathbbm{1}_{Z \in S}\right] 
&= \bb{E}\left[f(A_\eta)g(Y)\mathbbm{1}_{Z \in S}\right] \annot{$S \in \sigma(Z)$}  \\
&= \bb{E}_{(A,Z,Y)}\left[\bb{E}_\eta[f(A_\eta)]g(Y)\mathbbm{1}_{Z \in S} \right] \annot{$\eta \indep (A,Y,Z)$} \\
&= \bb{E}_{(A,Z,Y)}\left[\bb{E}[\bb{E}_\eta[f(A_\eta)]g(Y)|Z]\mathbbm{1}_{Z \in S} \right] \annot{$S \in \sigma(Z)$}\\
&= \bb{E}_{(A,Z,Y)}\left[\bb{E}[\bb{E}_\eta[f(A_\eta)]|Z]\bb{E}[g(Y)|Z]\mathbbm{1}_{Z \in S} \right] \annot{$Y \indep A \;|\; Z$}\\\
&= \bb{E}_{(A,Z,Y)}\left[\bb{E}_\eta[f(A_\eta)]\bb{E}[g(Y)|Z]\mathbbm{1}_{Z \in S} \right] \\
&= \bb{E}\left[f(A_\eta)\bb{E}[g(Y)|Z]\mathbbm{1}_{Z \in S} \right] \annot{$\eta \indep (A,Y,Z)$} \\
&= \bb{E}\left[\bb{E}[f(A_\eta)|Z]\bb{E}[g(Y)|Z]\mathbbm{1}_{Z \in S} \right],
\end{align*}
where the last step follows since $\bb{E}[g(Y)|Z]\mathbbm{1}_{Z \in S}$ is $Z$-measurable.

\pagebreak

Because $S$ was arbitrary and both $\bb{E}[f(A_\eta)|Z]\bb{E}[g(Y)|Z]$ and $\bb{E}[f(A_\eta)g(Y)|Z]$ are $\sigma(Z)$-measurable, we can conclude that $\bb{E}[f(A_\eta)|Z]\bb{E}[g(Y)|Z] = \bb{E}[f(A_\eta)g(Y)|Z]$ almost surely so \eqref{gen-cal} is satisfied. A very similar argument shows that \eqref{gen-eo} holds. Therefore, by Theorem \ref{impossibility-result}, $A_\eta$ is independent of $(Z, Y)$. Then given arbitrary bounded and continuous functions $f$ and $g$,
\[\bb{E}[f(A_\eta)g(Z,Y)] = \bb{E}[f(A_\eta)]\bb{E}[g(Z,Y)].\] 
Using the fact that $A_\eta \to A$ as $\eta \downarrow 0$ in $L^2(\bb P)$, and that $h \mapsto \bb{E}[h]$ and $(h,h') \mapsto \bb{E}[hh']$ are continuous in $L^2(\bb P)$, we conclude by continuity that 
\[\bb{E}[f(A)g(Z,Y)] = \bb{E}[f(A)]\bb{E}[g(Z,Y)].\] Since $f$ and $g$ were arbitrary, we have in fact shown that $A$ is independent of $(Z,Y)$, as wanted.

Thus, we have succeeded in proving the following refinement: under Theorem \ref{impossibility-result} assumptions \eqref{gen-cal} and \eqref{gen-eo}, if $Y$ cannot be perfectly predicted from any realization  of $Z$, then the random variables $A$ and $(Y,Z)$ must be independent. 

Since assumptions \eqref{gen-cal} and \eqref{gen-eo} continue to hold if we condition on $Z \in S$ for any $S$, we can say further that if Theorem \ref{impossibility-result} conditions \eqref{gen-cal} and \eqref{gen-eo} hold and $P$ is the set of values of $Z$ from which perfect prediction is not possible, i.e. 
$\Var(Y|Z) > 0$ then $A$ and $Y$ are independent conditionally on $Z \in P$.
\end{proof}

\subsection{Proof of Proposition \ref{roc-feas-reg}}
\begin{proof}
First we can prove a lemma stating that $S(A)$ is convex. To see this, let $\xi$ be an independent $\mr{Ber}(\lambda)$ random variable. Then, by iterating expectations, one sees that
\[\alpha(\hat y + \xi(\hat z- \hat y), A) = \lambda \alpha(\hat z,A) + (1-\lambda)\alpha(\hat y, A).\]  
Using this convexity, we can prove the proposition. Note that the points $\alpha(\mathbbm{1}\{p^* \ge c\},A)$ that make up the group-$A$ ROC curve of $p^*$ describe the error rates achieved by all cutoff classifiers based on $p^*$, and so they are in $S(A)$. Meanwhile, since 
\begin{equation*}
\alpha(1 - \hat y, A) = (1, 1) - \alpha(\hat y, A),
\end{equation*}
the points $(1,1) - \alpha(\mathbbm{1}\{ p^* \ge c\},A)$ must also be in $S(A)$. This corresponds to the group-$A$ ROC curve of the scores $1-p^*$. Any point in the convex hull of these two ROC curves can be achieved by randomization as in the aforementioned lemma. For further details and intuition, see Section 4 in Hardt {et al.} \cite{hardt_equality_2016}. Note that Hardt {et al.} choose not to illustrate the feasible region below the main diagonal as it corresponds to classifiers that are worse than random. 

To show that \emph{all} attainable error rates belong to this set, we use the convexity of $S(A)$ to note that the support points of $S(A)$ correspond to all classifiers that yield extrema of $ \gamma_1 \alpha_1(\hat y, A) +\gamma_2 \alpha_2(\hat y, A)$ where $(\gamma_1,\gamma_2)$ are arbitrary weights. To describe these support points tractably, we can use the result derived later in the appendix (Proposition \ref{post-processing-is-sufficient}) that shows that optimal classifications can be chosen to depend on only $p^*$ and $A$, where $p^* = \bb{E}[Y|X,A]$. Thus the extrema of $\gamma \cdot \alpha(\hat y,A)$ are achieved by cutoff rules $f(p^*,A)=\mathbbm{1}\{p^* \ge c\}$ and $f(p^*,A)=\mathbbm{1}\{p^* < c\}$, giving support points 
\[\bigcup_{c \in [0,1]} \bigg\{\alpha(\mathbbm{1}\{p^* \ge c\},A),\; (1,1) - \alpha(\mathbbm{1}\{p^* \ge c\},A)\bigg\},\]
which as we have shown are contained in $S(A)$. Finally, we use the fact that a convex set containing all of its support points is equal to the convex hull of its support points. 
\end{proof} 

\subsection{Proof of Extension of Proposition \ref{ppv-npv-prelim}}
\begin{proof}
Suppose that \ref{cond-equiv-i} holds and call $\hat p_f$ the fair score for which $\hat y =\mathbbm{1}\{\hat p_f \ge \bar p\} $ satisfies equal error rates. Then since $\hat p_f$ is calibrated,
\begin{align*}
\mathbb{P} (Y=1 | \hat y =1,A) = \mathbb{E} [Y| \hat p_f \ge \bar p, A]
= \bb{E}[\hat p_f| \hat p_f &\ge \bar p, A] 
\ge \bar p, \\
\mathbb{P} (Y=1 | \hat y =0,A) &< \bar p.
\end{align*}
So in addition to satisfying equal error rates, $\hat y$ satisfies (\ref{ppv-cond}) and (\ref{npv-cond}), which are equivalent to the two conditions in \eqref{npv-ppv-prelim}. Thus  \ref{cond-equiv-ii} is a necessary condition for fairness. 

Now we show the converse; \ref{cond-equiv-ii} is also sufficient for fairness. Suppose that \ref{cond-equiv-ii} holds and let $\hat y_f$ be a classifier satisfying equal error rates and \eqref{npv-ppv-prelim}. Choose $\hat p(\hat y_f, A) = \mathbb{P}(Y=1 | \hat y_f, A)$. These scores are calibrated by construction. Also, since they satisfy $\hat p(\hat y_f = 0, A) < \bar p$ and $\hat p(\hat y_f = 1, A) \ge \bar p$, they exactly implement the classifier $\hat y_f$ at the cutoff $\bar p$.  
\end{proof}

\subsection{Proof of Theorem \ref{characterizaton}}

\begin{proof}
Building on the above extension of Proposition \ref{ppv-npv-prelim}, it is enough for us to show that the existence of the point $(\alpha_1,\alpha_2) \in S(L) \cap S(H)$ satisfying (\ref{npv-ppv-thm}) is equivalent to the following: There exists a classifier $\hat y$ satisfying equal error rates and (\ref{npv-ppv-prelim}).

First note that $S(L) \cap S(H)$ is nonempty, since for example $(0,0)$ and $(1,1)$ are points in both $S(L)$ and $S(H)$. So we can consider some arbitrary $(\alpha_1,\alpha_2)$ that is in $S(L) \cap S(H)$ and is therefore implementable by an equal error rate classifier that we call $\hat y_e$. We need to show that $\hat y_e$ satisfying the conditions in (\ref{npv-ppv-prelim}) $\forall A$ is equivalent to its corresponding true and false positive rates $(\alpha_1(\hat y_e, A),\alpha_2(\hat y_e, A))$ satisfying (\ref{npv-ppv-thm}) $\forall A$. 
Recall that the PPV condition in \eqref{npv-ppv-prelim} required
\[\bb{P}(Y=1|\hat y_e = 1, A) \ge \bar p.\]
Applying Bayes' rule to the inequality, we have
\begin{align*}
\bb{P}(Y=1|\hat y_e = 1, A) 
&= \frac{\bb{P}(\hat y_e=1| Y = 1, A)\bb{P}(Y = 1| A)} {\bb{P}(\hat y_e = 1 | A)} \\
&= \frac{\alpha_2(\hat y_e, A) \mu_A}{\alpha_2(\hat y_e, A) \mu_A + \alpha_1(\hat y_e, A) (1 - \mu_A)} \ge \bar p.
\end{align*}
After algebraic manipulation, the restriction can be written 
\begin{equation*}
\frac{\alpha_2(\hat y_e, A)}{\alpha_1(\hat y_e, A)} \ge \frac{\bar p (1-\mu_A)}{(1-\bar p)\mu_A} = \frac{\bar p }{(1-\bar p)\beta_A}.
\end{equation*}
where $\beta_A \equiv \nf{\mu_A}{(1-\mu_A)}$. Therefore $(\alpha_1(\hat y_e, A),\alpha_2(\hat y_e, A))$ must satisfy the following 
\begin{equation*}
\frac{\alpha_2(\hat y_e, A)}{\alpha_1(\hat y_e, A)} \ge \frac{\bar p }{(1-\bar p)\beta_A}
\end{equation*}
Since $\beta_L < \beta_H$, the condition is more restrictive when $A=L$, giving the first condition in (\ref{npv-ppv-thm}). We next similarly transform the NPV condition in (\ref{npv-ppv-prelim}), recalling it requires  
$\bb{P}(Y=0|\hat y = 0, A) > 1- \bar p $. 
By Bayes' rule,
\begin{align*}
\bb{P}(Y=0|\hat y = 0, A)
&= \frac{\bb{P}(\hat y=0| Y = 0, A)\bb{P}(Y = 0| A)} {\bb{P}(\hat y = 0 | A)} \\
&= \frac{(1-\alpha_1(\hat y, A)) (1-\mu_A)}{(1-\alpha_1(\hat y, A)) (1-\mu_A) + (1-\alpha_2(\hat y, A))  \mu_A} > 1- \bar p.
\end{align*}
After algebraic manipulation, this becomes $\forall A$
\begin{equation*}
\frac{(1-\alpha_1(\hat y, A))}{(1-\alpha_2(\hat y, A))} >  \frac{(1-\bar p)\beta_A}{\bar p}.
\end{equation*}
Since $\beta_H > \beta_L$, the most restrictive case is when $A=H$, giving the second condition in (\ref{npv-ppv-thm}).

Note that special attention should be given to the corner solutions. At point $(0,0)$, the first condition in (\ref{npv-ppv-thm}) becomes irrelevant and so the second condition in (\ref{npv-ppv-thm}) is necessary and sufficient. Meanwhile at $(1,1)$, the second condition in (\ref{npv-ppv-thm}) becomes irrelevant so the first condition in (\ref{npv-ppv-thm}) is necessary and sufficient.
\end{proof} 

\subsection{Proof of Corollary \ref{checkable-cond}}
\begin{proof}

Let $F$ and $G$ denote the lines for which the inequalities \eqref{npv-ppv-thm} hold with equality. That is to say, $F, G \subset \bb{R}^2$ are given by
\begin{equation*}
F = \Set[(\alpha_1,\alpha_2) \in \bb{R}^2]{\frac{\alpha_2}{\alpha_1} = \frac{\bar p}{\beta_L (1 - \bar p)} }, \qquad 
G = \Set[(\alpha_1,\alpha_2) \in \bb{R}^2]{\f {(1-\alpha_1)}{(1-\alpha_2)} =  \f{\beta_H (1-\bar p)}{\bar p}}
\end{equation*}
The lines intersect at $(\breve \alpha_1, \breve \alpha_2)$ given by (\ref{breve}). Our proof will rest on a few basic facts: $S(L) \cap S(H)$ is convex, $F$ contains $(0,0)$, $G$ contains $(1,1)$, and both lines have positive slope. 
First we prove that if $\breve \alpha_1 \le 0$, $\breve \alpha_1 \ge 1$, or both ROC curves lie above the intersection $(\breve \alpha_1, \breve \alpha_2)$, then there exists a point $(\alpha_1, \alpha_2)$ satisfying the feasibility conditions in Theorem \ref{characterizaton}. 

\textit{Case I: $0 < \breve{\alpha_1} < 1$ and $(\breve{\alpha_1}, \breve{\alpha_2})$ lies below both ROC curves.}  Note that increasing $\alpha_2$ slackens both inequalities \eqref{npv-ppv-thm}. Thus, if $0 < \breve{\alpha_1} < 1$ and $(\breve{\alpha_1}, \breve{\alpha_2})$ lies below both ROC curves, there then exists a point $(\breve{\alpha_1}, \alpha_2)$ with $\alpha_2 > \breve \alpha_2$ that lies on the minimum of the two ROC curves, hence in $S(H) \cap S(L)$, and moreover the inequalities \eqref{npv-ppv-thm} hold at $(\breve{\alpha_1}, \alpha_2)$. This is a feasible point. 

\textit{Case II: $\breve \alpha_1 \le 0$. } On the other hand, if $\breve \alpha_1 \le 0$, then in $(0,1) \times \bb R$ the line $F$ lies strictly above $G$. Then the point $(0,0) \in S(L) \cap S(H) \cap F$ lies above $G$, meaning that the second condition in (\ref{npv-ppv-thm}) holds and the point is feasible.  

\textit{Case III: $\breve \alpha_1 \ge 1$. } If $\breve \alpha_1 \ge 1$, then in $(0,1) \times \bb R$ the line $G$ lies strictly above $F$. Then the point $(1,1) \in S(L) \cap S(H) \cap G$ lies above $F$, so the first condition in (\ref{npv-ppv-thm}) holds and the point is feasible.

Finally, we prove the converse that if $0 < \breve \alpha_1 < 1$ and $\breve \alpha_2$ lies above at least one of the ROC curves, then the feasible region is empty. Let the intersection of  $S(L) \cap S(H)$ with the half-space above $F$ be denoted by $I_F$, and the intersection of  $S(L) \cap S(H)$ with the half-space above $G$ be denoted by $I_G$. We need to show that $I_F \cap I_G$ is empty.  The argument follows from the convexity of  $S(L) \cap S(H)$ and the fact that both $F$ and $G$ have positive slopes. In particular, due to the convexity of  $S(L) \cap S(H)$, the positive slope of $F$, and the fact that $(0,0)$ is in $F$, we know the line $F$ must intersect the boundary of  $S(L) \cap S(H)$ strictly to the left of $\breve \alpha_1$. Meanwhile, $G$ must intersect the boundary of  $S(L) \cap S(H)$ strictly to the right of $\breve \alpha_1$. Thus the rightmost point of $I_F$ lies strictly to the left of the leftmost point of $I_G$, and the intersection of $S(L) \cap S(H)$ with both half-spaces above $F$ and $G$ must be empty.  \end{proof} 

\subsection{Justification for post-processing \texorpdfstring{\boldmath $p^*$}{p*} in algorithm}

First we justify post-processing the Bayes optimal $p^*$ to arrive at the optimal fair $\hat p$. To do so we adapt Proposition 5.2 from Hardt {et al.} \cite{hardt_equality_2016} to our setting and prove the following

\enlargethispage{-1\baselineskip}
\begin{proposition}\label{post-processing-is-sufficient}
For any source distribution over $(Y, X, A)$ with Bayes optimal regressor given by $p^*(X,A)=\bb{E}[Y|X,A]$ and  loss function $\ell$, there exists a predictor $\hat p (p^*,A)$ such that

\begin{romanenumerate}
\item $\hat p$ is an optimal predictor satisfying our fairness properties of calibration and equal error rates. That is, $\bb{E}[\ell(\mathbbm{1}_{\hat p >\underline{p}}, Y)] \leq \bb{E}[\ell(\mathbbm{1}_{\hat{g}>\underline{p}}, Y)]$ for any $\hat g$ that satisfies the properties. 
\item $\hat p$ is derived from $(p^*,A)$. In particular, it is a (possibly random) function of the random variables $(p^*,A)$ alone, and is independent of $X$ conditional on $(p^*,A)$.  
\end{romanenumerate}
\end{proposition}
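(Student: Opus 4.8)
The plan is to adapt Proposition 5.2 of \citet{hardt_equality_2016}. The engine of the argument is that $p^*$ is a \emph{sufficient statistic} for $Y$ given $(X,A)$: since $Y$ is binary with $\bb{E}[Y\mid X,A]=p^*$, we have $Y\mid(X,A)\sim\mr{Ber}(p^*)$, hence $(Y\indep X)\,|\,(p^*,A)$. I will show that for \emph{any} fair predictor $\hat g$ — a possibly randomized function of $(X,A)$ that is calibrated \eqref{calibration} and whose thresholding satisfies equal error rates \eqref{new-parity} — there is a predictor $\hat p$ that is a function of $(p^*,A)$ and fresh randomization \emph{alone} and that has the same joint law with $(Y,A)$ as $\hat g$ does. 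Since the decision-maker's expected loss is a functional of $\mathcal{L}(\hat p,Y)$, and since calibration \eqref{calibration} and equal error rates \eqref{new-parity} are functionals of $\mathcal{L}(\hat p,Y,A)$, the predictor $\hat p$ inherits all three from $\hat g$. Applying the construction to a loss-minimizing fair predictor then proves both (i) and (ii).

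\textbf{Construction.} Represent an arbitrary randomized predictor as $\hat g=g(X,A,U)$ with $U$ an external randomization variable independent of $(Y,X,A)$. Draw $V\sim\mr{Unif}[0,1]$ independent of $(Y,X,A)$. By disintegrating $\mathcal{L}(\hat g\mid p^*,A)$ along $(p^*,A)$, choose a measurable kernel $\phi$ so that $\hat p:=\phi(p^*,A,V)$ satisfies $\mathcal{L}(\hat p\mid p^*,A)=\mathcal{L}(\hat g\mid p^*,A)$ almost surely. By construction, $\hat p$ is a (randomized) function of $(p^*,A)$ and is conditionally independent of $X$ given $(p^*,A)$; this is exactly part (ii).

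\textbf{Matching the joint law and concluding.} The key step is the conditional independence $(\hat g\indep Y)\,|\,(p^*,A)$: from $(Y\indep X)\,|\,(p^*,A)$ and $U\indep(Y,X,A)$ one gets $(Y\indep(X,U))\,|\,(p^*,A)$, so $Y$ is conditionally independent of $\hat g=g(X,A,U)$ given $(p^*,A)$; and $(\hat p\indep Y)\,|\,(p^*,A)$ holds trivially because $\hat p$ uses only $(p^*,A)$ and the fresh variable $V$. Together with $\mathcal{L}(\hat p\mid p^*,A)=\mathcal{L}(\hat g\mid p^*,A)$, this gives
\[\mathcal{L}(\hat p,Y\mid p^*,A)=\mathcal{L}(\hat g,Y\mid p^*,A)\quad\text{a.s.},\]
and integrating against $\mathbbm{1}\{A=a\}$ shows that $(\hat p,Y,A)$ and $(\hat g,Y,A)$ are equal in distribution. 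Hence the expected loss, calibration, and equal error rates all transfer from $\hat g$ to $\hat p$. Finally, the construction shows that $\inf_{\hat g\text{ fair}}\bb{E}[\ell(\mathbbm{1}\{\hat g\ge\bar p\},Y)]$ is unchanged upon restricting to fair $\hat g$ of the form $\hat p(p^*,A)$; since expected loss is affine in the group-wise error rates and the fairness-compatible error-rate set is compact (Propositions \ref{roc-feas-reg} and \ref{ppv-npv-prelim}), this infimum is attained by some fair $\hat p(p^*,A)$, which is the desired predictor.

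\textbf{Anticipated difficulty.} The one substantive point is the claim $(\hat g\indep Y)\,|\,(p^*,A)$ — i.e.\ correctly exploiting sufficiency of $p^*$ — together with the measurable disintegration used to define $\phi$; everything afterward is bookkeeping with functionals of a joint law. Minor care is also needed for corner values $\hat p\in\{0,1\}$, for ties at the threshold (absorbed into the randomization already present in $\hat p$), for the compactness fact underlying attainment of the optimum, and for matching the generic cutoff $\underline p$ of the statement with the decision-maker's $\bar p$ used elsewhere.
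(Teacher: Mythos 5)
Your proposal is correct and follows essentially the same route as the paper: both exploit that $Y$ is conditionally independent of $X$ given $(p^*,A)$ (sufficiency of the Bayes score), build a predictor measurable in $(p^*,A)$ plus independent randomization whose conditional law given $(p^*,A)$ matches that of $\hat g$ (your disintegration kernel $\phi$ is the abstract form of the paper's ``redraw $X'$ from the conditional law of $X$ given $(p^*,A)$ and set $\hat p = f(X',A)$''), and then invoke the obliviousness of calibration, equal error rates, and expected loss to transfer them from $\hat g$ to $\hat p$. Your added remarks on randomized $\hat g$ and on attainment of the infimum go slightly beyond the paper's proof, which treats $\hat g = f(X,A)$ and leaves attainment implicit, but they do not change the argument.
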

\begin{proof}
To start, first note that our fairness properties of calibration and equal error rates on a score $p$ and classifications $\mathbbm{1}\{ p \ge \bar p\}$ are ``oblivious.'' That is, they depend only on the joint distribution of $(Y, A, p)$ given the known cutoff $\bar p$. We will show that for any arbitrary $\hat g$ that satisfies the fairness properties, we can construct a $\hat p$ that also satisfies fairness, yields the same expected loss, and is derived from $(p^*,A)$. 

Consider an arbitrary $\hat g =f(X, A)$ satisfying the fairness properties. We can define $\hat p (p^*, A)$ as follows:  draw a vector $X'$ independently from the conditional distribution of $X$ given the realized values of $p^*$ and $A$, and set $\hat p = f (X', A)$. Note this $\hat p $ satisfies (ii) by construction.     

To show that this $\hat p$ satisfies the fairness properties and yields the same expected loss as $\hat g$, note that since $Y$ is binary with conditional expectation equal to the Bayes optimal $p^*$, we know $Y$ is independent of $X$ conditional on $p^*$. Therefore $(Y, p^*, X, A)$ and $(Y, p^*, X', A)$ have the same joint distribution, and so must $(f(X,A),A,Y)$ and $(f(X',A),A,Y)$. Since the fairness properties are oblivious and depend only on these latter joint distributions, then we know that as long as $\hat g$ satisfies them then so will $\hat p$. Finally, we can deduce that $(Y, \hat g)$ and $(Y, \hat p)$ also have the same joint distribution, meaning that (i) is satisfied with equality. 
\end{proof}

\subsection{Our algorithm as a mean-preserving contraction of scores}

We observe that a calibrated score derived from another is a mean-preserving contraction. Since the Bayes optimal $p^*$ that serves as input to our algorithm frequently satisfies calibration (see Liu {et al.} 2019), then our post-processing method can be viewed as finding its smallest mean preserving contraction that achieves equal error rates at the decision-maker's cutoff. 

The relationship between calibrated scores related by post-processing is characterized by our proposition below.  

\begin{proposition}
Let $p_A$ be any calibrated score of group $A$, i.e. satisfying $\bb{E}[Y|p_A] =p_A$ for members of $A$, and let $\hat{p}_A = f(p_A, \zeta)$ be a score post-processed from $p_A$ that is also calibrated, where $\zeta$ is independent of $Y$ conditional on $p_A$. 
Then, $\hat{p}_A$ is a mean-preserving contraction of $p_A$, with $p_A = \hat{p}_A + Z $ and $\bb{E}[Z|\hat{p}_A]=0$. Conversely, any $\tilde{p}_A$ that satisfies $p_A = \tilde{p}_A + Z $ with $\bb{E}[Z|\tilde{p}_A]=0$ is calibrated. 
\end{proposition}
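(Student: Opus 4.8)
The plan is to reduce both directions to a single identity obtained by iterating expectations over nested conditioning information, using that the post-processing noise $\zeta$ is independent of $Y$ given $p_A$. Throughout I restrict attention to members of group $A$, so ``calibrated'' means $\bb{E}[Y\mid p_A]=p_A$.

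\textbf{Forward direction.} Set $Z := p_A - \hat{p}_A$; the goal is $\bb{E}[Z\mid\hat p_A]=0$, i.e.\ $\bb{E}[p_A\mid\hat p_A]=\hat p_A$. Since $\hat p_A=f(p_A,\zeta)$ it is $\sigma(p_A,\zeta)$-measurable, and $\zeta\indep Y\mid p_A$ gives $\bb{E}[Y\mid p_A,\zeta]=\bb{E}[Y\mid p_A]$, hence by a further conditioning $\bb{E}[Y\mid p_A,\hat p_A]=\bb{E}[Y\mid p_A]$. Then by the tower property, calibration of $p_A$, and finally calibration of $\hat p_A$,
\[\hat p_A \;=\; \bb{E}[Y\mid\hat p_A] \;=\; \bb{E}\!\left[\bb{E}[Y\mid p_A,\hat p_A]\,\middle|\,\hat p_A\right] \;=\; \bb{E}\!\left[\bb{E}[Y\mid p_A]\,\middle|\,\hat p_A\right] \;=\; \bb{E}[p_A\mid\hat p_A].\]
This is exactly $\bb{E}[Z\mid\hat p_A]=0$, so $p_A=\hat p_A+Z$ exhibits $p_A$ as a mean-preserving spread of $\hat p_A$, equivalently $\hat p_A\le_{\mr{cx}} p_A$. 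The qualifier ``mean-preserving'' is immediate: taking expectations in $\bb{E}[Z\mid\hat p_A]=0$ gives $\bb{E}[\hat p_A]=\bb{E}[p_A]$ (both in fact equal $\bb{E}[Y\mid A]$).

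\textbf{Converse.} Here I take $\tilde p_A$ in the same class, i.e.\ a post-processing $\tilde p_A=g(p_A,\zeta)$ with $\zeta\indep Y\mid p_A$, and assume $p_A=\tilde p_A+Z$ with $\bb{E}[Z\mid\tilde p_A]=0$. The hypothesis directly yields $\bb{E}[p_A\mid\tilde p_A]=\bb{E}[\tilde p_A+Z\mid\tilde p_A]=\tilde p_A$, and exactly as above $\zeta\indep Y\mid p_A$ gives $\bb{E}[Y\mid p_A,\tilde p_A]=\bb{E}[Y\mid p_A]$, so
\[\bb{E}[Y\mid\tilde p_A] \;=\; \bb{E}\!\left[\bb{E}[Y\mid p_A,\tilde p_A]\,\middle|\,\tilde p_A\right] \;=\; \bb{E}\!\left[\bb{E}[Y\mid p_A]\,\middle|\,\tilde p_A\right] \;=\; \bb{E}[p_A\mid\tilde p_A] \;=\; \tilde p_A,\]
i.e.\ $\tilde p_A$ is calibrated.

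\textbf{Main obstacle.} The only substantive step in both directions is the conditional-independence reduction $\bb{E}[Y\mid p_A,\cdot\,]=\bb{E}[Y\mid p_A]$, which is where the assumption that the post-processing noise is independent of $Y$ given $p_A$ is genuinely used; everything else is bookkeeping with iterated expectations. For the converse one must be careful that the decomposition hypothesis $p_A=\tilde p_A+Z$, $\bb{E}[Z\mid\tilde p_A]=0$ alone does \emph{not} force $\tilde p_A$ to be $\sigma(p_A)$-measurable, so the conditional-independence property inherited from the setup of the proposition is what lets one pass from $\bb{E}[p_A\mid\tilde p_A]=\tilde p_A$ to $\bb{E}[Y\mid\tilde p_A]=\tilde p_A$; if instead one only cares about deterministic post-processings $\tilde p_A=g(p_A)$, then $\sigma(\tilde p_A)\subseteq\sigma(p_A)$ makes that reduction automatic and the conditional-independence hypothesis is not needed.
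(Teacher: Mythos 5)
Your proof is correct and follows essentially the same route as the paper's: in the forward direction both arguments chain the tower property with calibration of $\hat p_A$, conditional independence of the post-processing noise, and calibration of $p_A$ to get $\hat p_A=\bb{E}[p_A\mid\hat p_A]$ (the paper conditions on $(p_A,\zeta)$ in the intermediate step, you condition on $(p_A,\hat p_A)$; these are interchangeable). The only substantive divergence is in the converse: the paper proves it under the implicit restriction $\sigma(\tilde p_A)\subseteq\sigma(p_A)$, i.e.\ a deterministic post-processing, via $\bb{E}[p_A\mid\tilde p_A]=\bb{E}[\bb{E}[Y\mid p_A]\mid\tilde p_A]=\bb{E}[Y\mid\tilde p_A]$, whereas you assume $\tilde p_A=g(p_A,\zeta)$ with $\zeta\indep Y\mid p_A$, which covers randomized post-processings and subsumes the paper's measurable case. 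Your caveat that the bare decomposition hypothesis alone does not suffice is correct: for instance, with $p_A\in\{0.3,0.7\}$ equiprobable, $Y\mid p_A\sim\mathrm{Ber}(p_A)$, and $\tilde p_A=\bb{E}[p_A\mid Y]\in\{0.42,0.58\}$, one has $p_A=\tilde p_A+Z$ with $\bb{E}[Z\mid\tilde p_A]=0$ yet $\bb{E}[Y\mid\tilde p_A]=Y\neq\tilde p_A$, so the converse as literally stated needs exactly the kind of restriction you (or the paper, implicitly) impose. In short, your write-up matches the paper's argument and, on the converse, makes explicit a hypothesis the paper uses silently.
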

\begin{proof}
We first show that $\hat{p}_A $ is a mean-preserving contraction of $p_A $. To start, note that the post-processed $\hat{p}_A$ is assumed to be calibrated, so $\bb{E}[Y|\hat{p}_A] =\hat{p}_A$. Moreover, since $\hat{p}_A = f(p_A, \zeta)$, we have $\sigma(\hat{p}_A) \subseteq \sigma(p_A, \zeta) $. Therefore by the tower property of conditional expectation,
\begin{align*}
\hat{p}_A = \bb{E} [ Y | \hat{p}_A ] 
&=  \bb{E}[ \bb{E}[Y|p_A, \zeta ] | \hat{p}_A]  \\
&= \bb{E}[ \bb{E}[Y|p_ A ] | \hat{p}_A]  \annot{by conditional independence of $\zeta$} \\
&= \bb{E}[p_A |\hat{p}_A] \annot{by calibration of $p_A$}.
\end{align*}
Then $p_A = p_A + (\hat{p}_A - \bb{E}[p_A |\hat{p}_A]) = \hat{p}_A + (p_A - \bb{E}[p_A |\hat{p}_A])$ where the second term is by construction mean independent of $\hat{p}_A$, so $\hat{p}_A$ is a mean-preserving contraction of $p_A$. 

Now we show that if the score $\tilde{p}_A$ is a mean-preserving contraction of $p_A$ such that $p_A = \tilde{p}_A + Z$ for some $Z$ satisfying $\bb{E}(Z| \tilde{p}_A) = 0$, then $\tilde{p}_A$ is calibrated. Observe that  
\begin{align*}
\bb{E}[p_A | \tilde{p}_A ] &= \bb{E}[\tilde{p}_A + Z | \tilde{p}_A ]  = \bb{E}[ \tilde{p}_A | \tilde{p}_A ] + \bb{E}[ Z | \tilde{p}_A ] = \tilde{p}_A
\end{align*}
which is sufficient to show that $\tilde{P}_A$ is calibrated. To see why, recall that $p_A$ is calibrated and note that by the tower property of conditional expectation with $\sigma(\tilde{p}_A) \subseteq \sigma(p_A) $,
\begin{align*}
\bb{E}[p_A | \tilde{p}_A ] &= \bb{E}[ \bb{E}(Y | p_A) | \tilde{p}_A ] = \bb{E}[ Y | \tilde{p}_A ].
\qedhere
\end{align*}
\end{proof}

\subsection{Justification for discretizing \texorpdfstring{\boldmath $p^*$}{p*}}

Our algorithm uses the discretization of $p^*$ to construct a linear program that maps probability masses from $p^*$ to $\hat p$. Note that even if the original $p^*$ is not discrete, it can easily be discretized into $N$ bins by taking $p' = \lfloor Np^* \rfloor/N.$ The discretized score will satisfy $|p' - p^*| \le N^{-1}$ almost surely, so for large values of $N$, the discretization $p'$ approximates $p^*$ well.

\subsection{Online appendix for credit lending application}
\subsubsection*{Raw data and cleaning}

Our empirical application is based on public data collected and made available by the U.S. Census Bureau, specifically the 2014 Survey of Income and Program Participation  \footnote{https://www.census.gov/programs-surveys/sipp/data/datasets.html}. We converted the datasets from Waves 2 and 4 to CSV format and then organized them to serve our prediction task: use features in Wave 2 to predict reported repayment ability in Wave 4. 

We matched every adult from the Wave 2 survey who responded to the Wave 4 survey and dropped the non-responders. We used education reported in Wave 2 to distinguish two groups $L$ and $H$ (representing 44\% and 56\% of the population respectively),  $L$ who attained at most a high school education and $H$ who attained more. We randomly allocated 30\% of all observations to a test set (about 8,000 adults) and the remaining 70\% to a training set (about 18,000 adults).

Our outcome was the respondents' ability to pay mortgage, rent, and utilities in every month tracked in 2016 according to Wave 4. Any adult who failed to pay mortgage, rent, and/or utilities in any month was assigned label $Y=0$, and otherwise assigned $Y=1$. Base rates differed across groups; 11\% of the less-educated group missed a payment compared to only 7\% of the higher-educated group.

Finally, we constructed two sets of features. The first was based on rich data, comprising virtually all available variables from the Wave 2 survey but dropping those with no variation in the training set (leaving over 2,000 in total). The second was based on limited data, where we hand-selected ``non-sensitive'' variables involving assets,  debts, income, and employment (over 800 in total). 

We identified which features were categorical and performed one-hot encoding. Then we standardized all features by centering them at 0 and dividing by their feature-specific standard deviations from the training set. 

\renewcommand\thefigure{A.\arabic{figure}}
\begin{figure}[h!] 
\begin{subfigure}[t]{.5\textwidth}
  \includegraphics[height=2in]{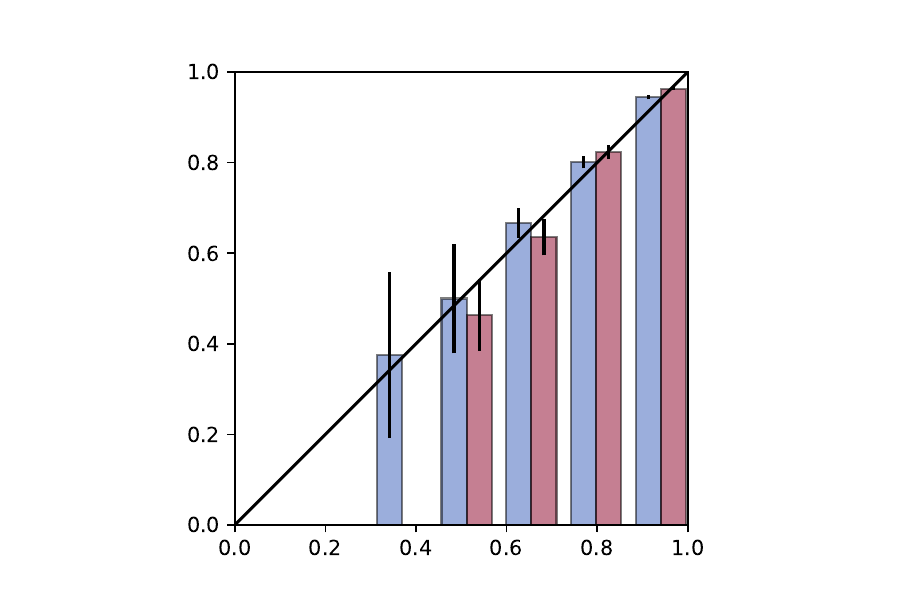}
  \caption{}\label{sipp-cal-pre}
\end{subfigure}
\hspace{.1in}
\begin{subfigure}[t]{.5\textwidth}
  \includegraphics[height=2in]{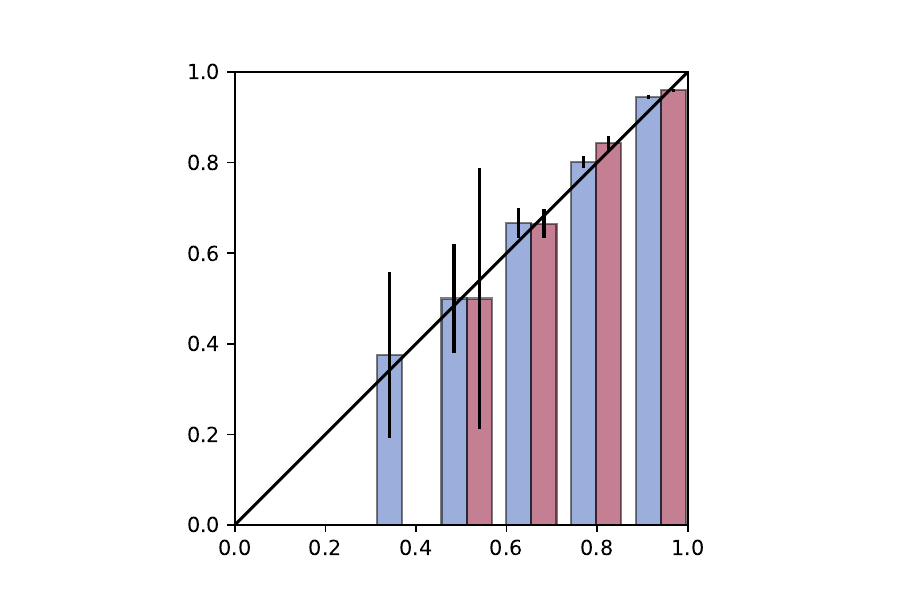}
  \caption{}\label{sipp-cal-post}
\end{subfigure}
\caption{Credit lending calibration plots. Panel \ref{sipp-cal-pre} depicts discretized pre-processed scores on the horizontal axis, with the portion in each bin paying their bills plotted on the vertical axis (including standard errors). Panel \ref{sipp-cal-post} depicts the calibration of post-processed scores. Our procedure is seen to preserve calibration.}
\label{lending-calibration}
\end{figure}

\begin{figure} 
\begin{subfigure}{1\textwidth}
\centering
 \includegraphics[trim=0 50 0 50, clip, height=2in]{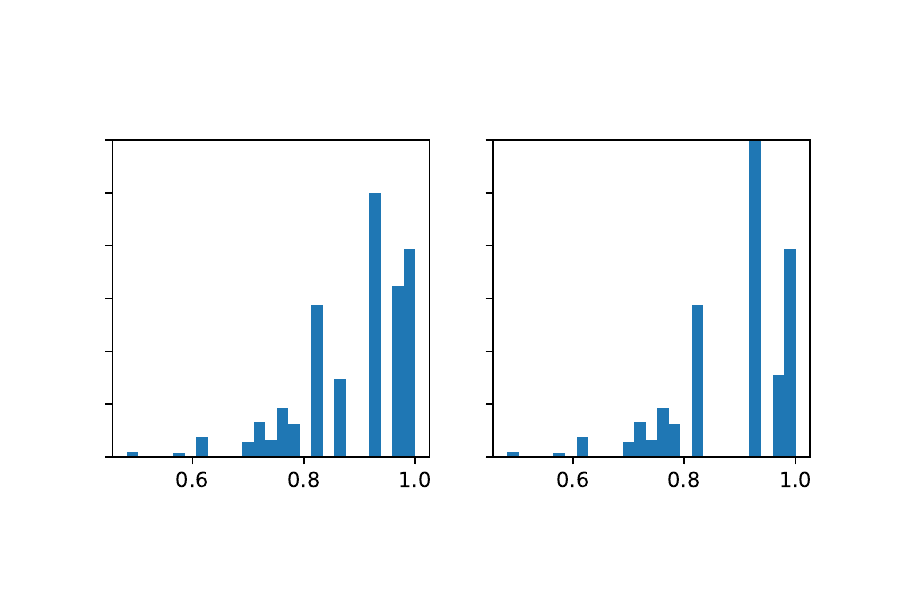}
\label{sipp-score-change-low}
\end{subfigure}
\begin{subfigure}{1\textwidth}
\centering
  \includegraphics[trim=0 50 0 50, clip, height=2in]{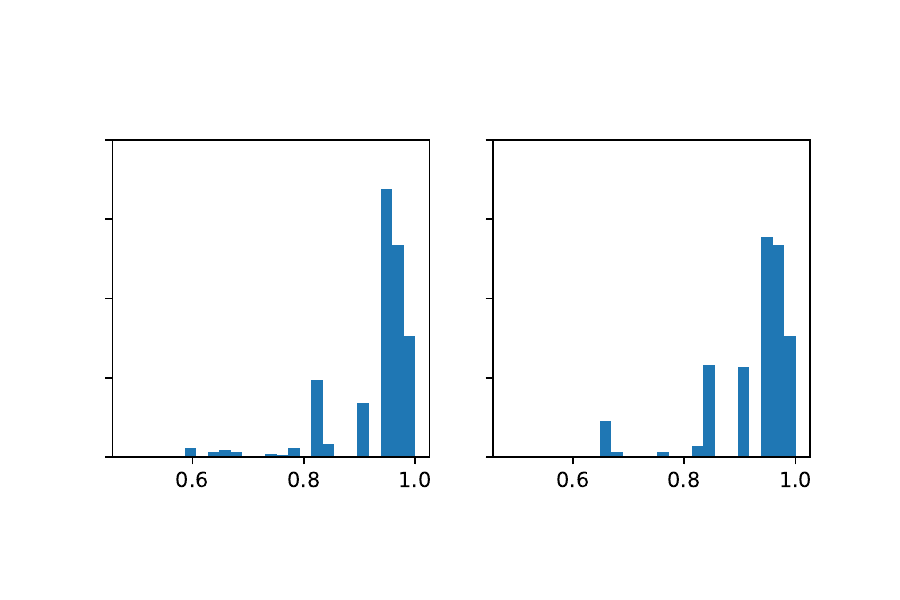}
\label{sipp-score-change-high}
\end{subfigure}
\begin{subfigure}{1\textwidth}
\centering
  \includegraphics[trim=0 50 0 50, clip, height=2in]{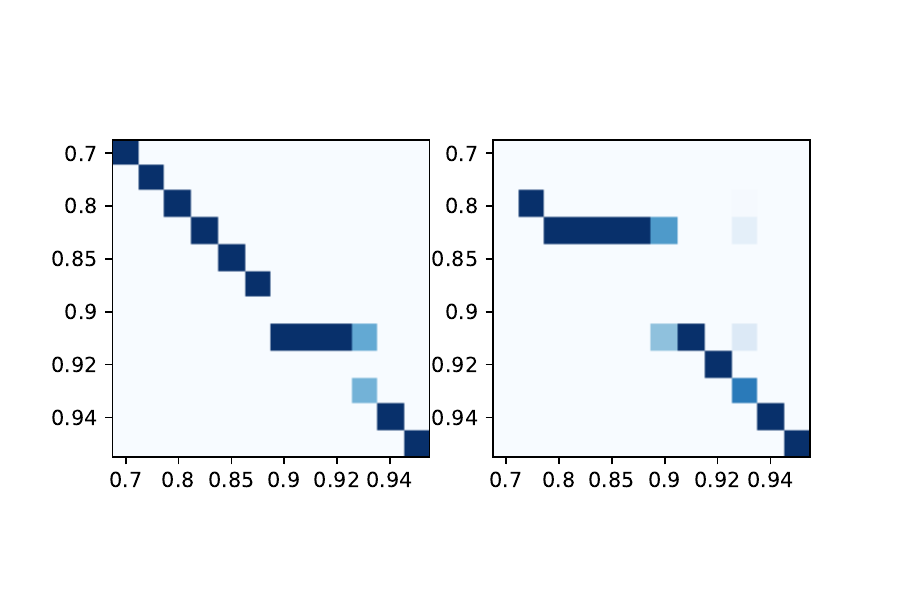}
\label{sipp-kernel}
\end{subfigure}
\caption{Credit lending score comparisons. The top-most plots depict the distribution of scores in the less-educated group (inputted $p^*$ on the left and outputted $\hat p$ on the right). The middle plots depict the distribution of scores in the high-educated group (inputted $p^*$ on the left and outputted $\hat p$ on the right). The bottom plots depict how the post-processing procedure assigns probability masses from the inputted score (horizontal axis) to the outputted score (vertical axis), with the less-educated group's transformation depicted to the left and the high-educated group's transformation depicted to the right.} 
\label{lending-score-change}
\end{figure}

\subsubsection*{Deriving our empirical results}

In deriving our empirical results, we employed the following protocol. As an initial step we estimated the original scores $p^*$ with LASSO where the penalty parameters were tuned using 10-fold cross validation in the training dataset. We then tuned and evaluated the post-processing procedure in the test dataset. This consisted of the following steps.

\begin{enumerate}

\item We compute a discrete approximation to the score distribution of $p^*$ for each group using the \texttt{numpy.histogram} Python method. This involves setting the user-defined hyperparameter $N$ for the number of bins. We produced all results with the specification $N = 50$. We also tried $N = 10, 15, 25, 100, 250$, which did not appear to change results significantly. For $N = 500, 1000$, results also did not change significantly but the running time was significantly longer.

\item Next we calibrate the discrete approximation to the data by replacing the score assigned to each bin with the average outcome. 
\color{black}
\item We use the calibrated and discretized  scores to compute group-specific ROC curves using the \texttt{{scikitlearn.metrics.roc\textunderscore curve}} function, and then compute the calibration compatibility constraints, assuming $k=10 \implies \bar p = \frac{10}{11}$. These determine the feasible region $R(H) \times R(L)$.  

\item We define the loss (\ref{dual}) as a function of error rates using $k=10$, picking $\Lambda$ to equate the true positive rates across groups. We minimize that loss in the feasible region using the \texttt{cvxpy} convex optimization library. 
\color{black}
We directly report the losses corresponding to the optima found by our procedure. Our comparisons correspond to removing 
\begin{enumerate}
\item the calibration compatibility constraints,
\item the calibration compatibility constraints and the \textcolor{black}{equal opportunity constraint}
\item both constraints, as well as omitting sensitive features from estimation of $p^*$. 
\end{enumerate}

\item Then we use our post-processing method to back out the most informative score $\hat p$ that produces the optimal error rates. In particular, we compute the transformation kernel $T$ using the \texttt{cvxpy} convex optimization library. Next we output post-processed scores by randomly mapping individuals' original scores given by $p^*$ to new scores $\hat p$ with probabilities specified by the kernel $T$.

\end{enumerate}

Finally, we evaluate our performance by inspecting the calibration of the output score and computing MSEs, error rates, and the decision-maker's loss. All losses we report are computed as a function of error rates, according to
\begin{align*}
\bb{E}\ell(\hat y, Y) 
&= \bb{E}[k\mathbbm{1}\{\hat y > Y\} + \mathbbm{1}\{\hat y < Y\}] \\
&= k\bb{P}(Y=0)\bb{P}(\hat y = 1 | Y = 0) + \bb{P}(Y=1)\bb{P}(\hat y = 0 | Y = 1).
\end{align*}
We simply replace the conditional probabilities by empirical averages from the test dataset. For the MSE of a risk score $\hat p$, we report $\bb{E}_n[(Y - \hat p)^2]$ as is standard. Although not reported in the table, the standard deviation of the MSE of our (randomized) post-processed scores from 100 repetitions is 0.0001. 

To assess the extent to which our post-processing preserves calibration, in Figure \ref{lending-calibration} we plotted score bins on the horizontal axis and the average outcomes within each bin along the vertical axis. Error bars depict the standard error of the mean estimate within each bin. 

We can also study how the post-processing transforms the most accurate estimates of $p^*$ to the outputted scores $\hat p$ that satisfy the fairness criteria, Figure \ref{lending-score-change} depicts in detail how the post-processing procedure shifts the original distribution of scores. 

\subsection{Online appendix for criminal justice application}

\subsubsection*{Raw data and cleaning}

The second example in our paper shows that our procedure can modify existing risk assessments to output calibrated scores and corresponding binary summaries satisfying equal error rates. We used the Broward County dataset of COMPAS risk scores made available by \textit{ProPublica}. \footnote{The file ``compas-scores-two-years.csv'' is available at https://github.com/propublica/compas-analysis}

Motivated by \textit{ProPublica}'s analysis, we chose as our outcome the variable ``two\_year\_recid'' and supposed that COMPAS scores from 1-4 are classified as low risk while those from 5-10 are classified as high risk. We also considered only defendants labelled as white and black (40\% and 60\% respectively from a total sample of 6,150). Their recidivism rates vary. A percentage 51\% of black defendants recidivated within two years, compared to 39\% of the white defendants.  

We define a positive label $Y=1$ as \textit{not} recidivating within two years, and otherwise assign label $Y=0$. Defined as such, the white defendants in the dataset have a higher base rate than the black defendants. 

\begin{figure} 
\begin{subfigure}{1\textwidth}
\centering
 \includegraphics[trim=0 50 0 50, clip, height=2in]{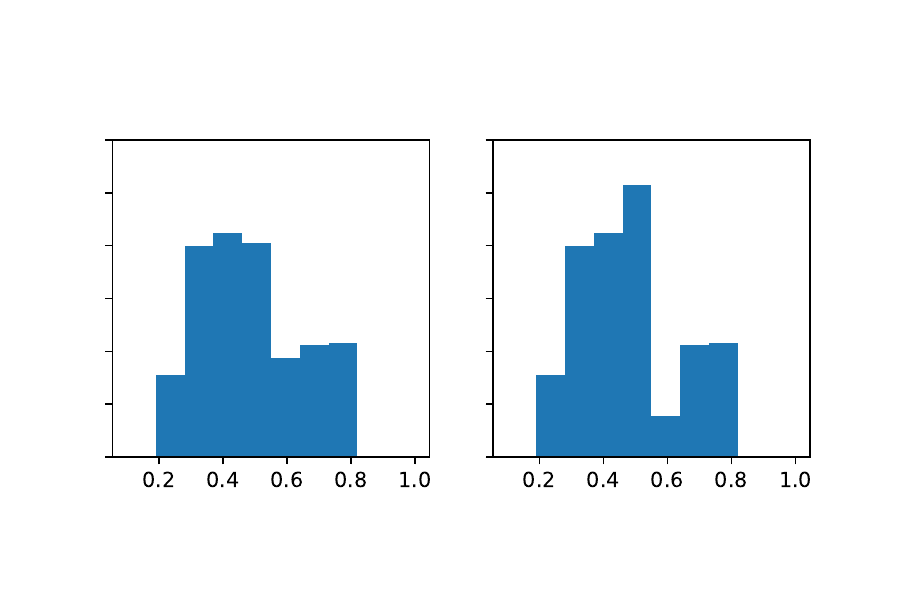}
\label{compas-score-change-black}
\end{subfigure}
\begin{subfigure}{1\textwidth}
\centering
  \includegraphics[trim=0 50 0 50, clip, height=2in]{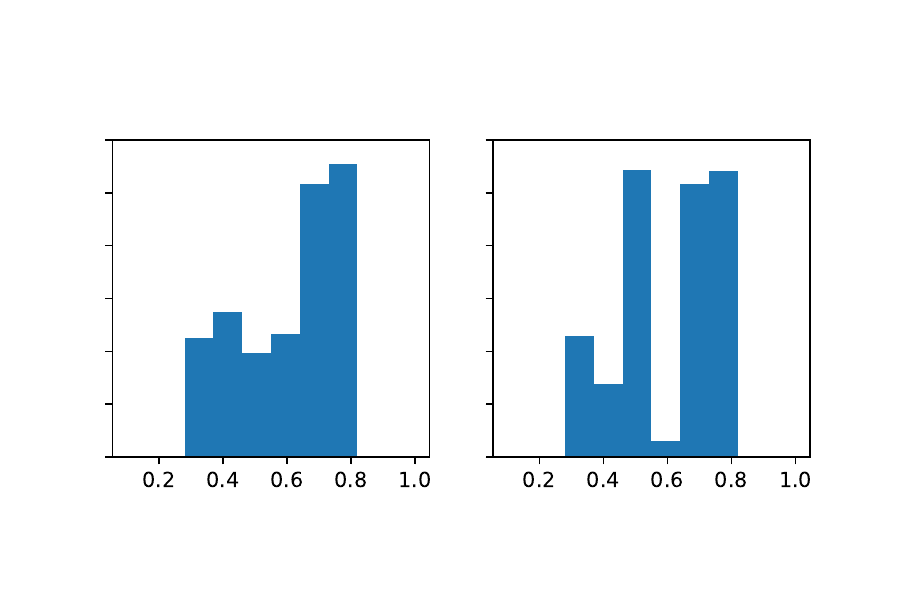}
\label{compas-score-change-white}
\end{subfigure}
\begin{subfigure}{1\textwidth}
\centering
  \includegraphics[trim=0 50 0 50, clip, height=2in]{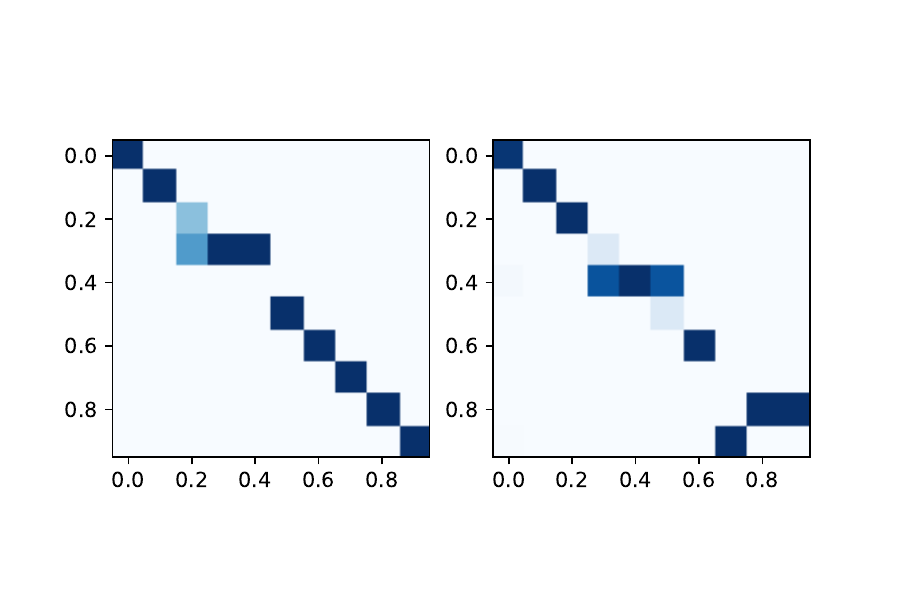}
\label{compas-kernel}
\end{subfigure}
\caption{Criminal justice score comparisons. Recall that we defined the scores to signify probabilities of \textit{not} recidivating. The top-most plots depict the distribution of scores among black defendants in the dataset (inputted $p^*$ on the left and outputted $\hat p$ on the right). The middle plots depict the distribution of scores among white defendants in the dataset (inputted $p^*$ on the left and outputted $\hat p$ on the right). The bottom plots depict how the post-processing procedure assigns probability masses from the inputted score (horizontal axis) to the outputted score (vertical axis), with the black defendants' transformation kernel depicted to the left and the white defendants' transformation kernel depicted to the right.} 
\label{compas-scores}
\end{figure}

\subsubsection*{Deriving our empirical results}

Our results followed these steps.

\begin{enumerate}

\item We divide all defendants' given decile scores by 10 so they lie between 0 and 1. 

\item Next we calibrate the group-specific scores by replacing each with the average outcome of individuals assigned that score. 

\item We use the calibrated discrete scores to compute group-specific ROC curves using the \texttt{{scikitlearn.metrics.roc\textunderscore curve}} function. 

\item Then we back out the effective $k$ and $\bar p$ that serve as inputs to the calibration compatibility constraint and the loss function. In particular, we wish to maintain the same effective risk cutoff in our post-processing as used in the \textit{ProPublica} analysis on original COMPAS scores. Therefore for our purposes we define the cutoff $\bar p$ to be the minimum score (after calibration) that was classified in the \textit{ProPublica} analysis as ``high.'' Along with the group base rates, this determines the calibration compatibility constraints that combined with our ROC curves give the feasible region. Given the corresponding $k$, we define the loss function and find the optimal target rates in that region. 

\item We use our risk score optimization method to back out the most informative scores that produce the target error rates. In particular, we compute the transformation kernel $T$ using the \texttt{cvxpy} convex optimization library. Then we output post-processed scores by randomly mapping individuals' original scores given by $p^*$ to new scores $\hat p$ with probabilities specified by the kernel $T$.

\end{enumerate}

Finally, we plot the error rates that our post-processing achieves and compare them to the disparate rates found by \textit{ProPublica}. We also produce a calibration plot showing that our procedure preserves predictive parity of the scores. To supplement the plots from the paper, Figure \ref{compas-scores} depicts how the post-processing procedure shifts the original distribution of scores to achieve the fairness criteria.

\end{document}